\documentclass{article}
\newif\ifanon
\anonfalse
\ifanon
  \usepackage{tmlr}
\else
  \usepackage[preprint]{tmlr}
\fi
\usepackage{amsmath,amssymb,amsthm}
\usepackage{graphicx}
\usepackage{booktabs}
\usepackage{hyperref}

\newtheorem{proposition}{Proposition}
\newtheorem{lemma}{Lemma}
\newtheorem{remark}{Remark}

\title{Can a Dynamic Internal Field Govern a Transformer's Cognition?\\
Certifiability, not Superiority, in Homeostatic Compute Control}
\author{\name Francisco M. Arrabal-Campos$^{\ast}$ \email fmarrabal@ual.es\\
\addr Department of Chemistry and Physics, Research Centre CIAIMBITAL\\
\addr Department of Engineering, Research Centre CIAIMBITAL\\
\addr University of Almer\'ia, 04120 Almer\'ia, Spain \quad ($^{\ast}$corresponding author)
\AND
\name Francisco G. Montoya \email pagilm@ual.es\\
\addr Department of Engineering, Research Centre CIAIMBITAL\\
\addr University of Almer\'ia, 04120 Almer\'ia, Spain
\AND
\name Alfredo Alcayde \email aalcayde@ual.es\\
\addr Department of Engineering, Research Centre CIAIMBITAL\\
\addr University of Almer\'ia, 04120 Almer\'ia, Spain
\AND
\name Ignacio Fern\'andez \email ifernan@ual.es\\
\addr Department of Chemistry and Physics, Research Centre CIAIMBITAL\\
\addr University of Almer\'ia, 04120 Almer\'ia, Spain}

\ifanon
  \newcommand{\compcite}{\citep{anon2026cognition}}
\else
  \newcommand{\compcite}{\citep{arrabal2026cognition}}
\fi
\newcommand{\coderepo}{\url{https://github.com/fmarrabal/miuracognitive}}
\begin{document}
\maketitle

\begin{abstract}
An intelligent system does not merely reason: it \emph{governs} its own
reasoning---how much to compute, when to stop, which module to activate. We ask
whether that role of \emph{metacognitive governor} can be played by a
\emph{dynamic internal field}: a low-dimensional homeostatic state, with explicit
physics and certified stability, that modulates a transformer's cognition
without performing it. Our instance is the \emph{Homeostatic Background Processor}
(HBP): a field defined over the module graph of a transformer and governed by a
\emph{family} of partial differential equations on the graph Laplacian
---damped wave (forced Klein--Gordon), its diffusive limit, and a KdV-type
dynamics--- that evolves along the iterations of an adaptive-depth
\emph{reasoner} and modulates its computation (halting threshold, block gain,
memory gates; the interface also exposes a router-bias head, not consumed by
the models reported here) through a universal interface of interoception and
modulation. We
characterize the stability of the \emph{integrator} of the whole family, and
we are explicit that this is an integrator certificate and not a closed-loop
one. Two of the three ingredients are classical and we recall them with
attribution: a \emph{placement} dichotomy for antisymmetric operators
(gyroscopic in the second-order branch, positional in the first), which is
Kelvin--Tait--Chetaev \citep{thomson1879treatise,merkin1997} and, in its
first-order half, exactly Anti-Symmetric DGN \citep{gravina2023adgn} with
$\gamma\mathbf{I}$ generalized to $\mathbf{K}$; and a coercivity bound giving
an unconditionally contractive implicit kernel, which is the logarithmic-norm
resolvent estimate \citep{dahlquist1958stability,soderlind2006logarithmic}.
The circulatory branch produces \emph{flutter} with threshold
$\beta\rho(A^3)<2\zeta\omega_0^2$, exact only when stiffness \emph{and}
damping are both multiples of the identity ---a hypothesis none of our runs
satisfies, and under which the threshold is Bottema's criterion
\citep{bottema1955stability}. What is new is narrower, and we now prove it: a discrete
Schur--Cohn criterion with complex coefficients for the Verlet integrator with
velocity coupling, \emph{necessary and sufficient} per latent root and
requiring no commutation hypothesis, which shows that a backward-differenced
gyroscopic term does not inherit the neutrality it has in continuous time.
Empirically ($S_5$, NC$^1$-hard; pre-registered protocol, $n{=}10$, plus a
twenty-seed preregistered deconfounding campaign that partly overturned it),
the answer is threefold:
\textbf{substance no, structure only in part, certifiability yes}.
\emph{Substance (no):} the \emph{type} of the field's physics is
irrelevant for accuracy ---wave, diffusion, gated mixtures, \emph{non-local}
Poisson-type coupling and even a 2D Navier--Stokes flow substrate all give the
same accuracy; the evidence is consistent with the gradient laminating
every modulation demand to a quasi-static set-point, and incompressible
flow hits a physical ceiling
($\nabla\!\cdot u{=}0$ forbids concentrating information). \emph{Structure
(only in part):} the \emph{second} order of the dynamics endows
out-of-distribution \emph{compute allocation} with a robustness the
end-to-end learned halting control (\texttt{gating\_wm}) lacks, and ---after fixing a BF16 bug that froze the physical
parameters--- replicates unchanged; but a preregistered deconfounding
campaign bounds the claim. With twenty fresh seeds, with the first-order
branch rebuilt at \emph{equalized} caps (which the unconditionally
contractive implicit kernel makes safe, so the original caps were an
artifact of the explicit integrator), and with a matched-interface GRU
replacing the physical integrator, the order effect is strong in one
generator family ($+0.087$ $[+0.042,+0.132]$, $t{=}4.0$) and is not detected in
the other ($+0.014$ $[-0.013,+0.040]$, n.s.---an interval that excludes the
original v3 estimate but not a small positive effect): part of the original
contrast was capacity, not order. The GRU governor is statistically
indistinguishable from the field in the first family
($+0.006$ $[-0.051,+0.062]$, n.s.) and nominally exceeds it in the second ($-0.035$
$[-0.067,-0.002]$), with mutual accuracy non-inferiority throughout.
\emph{Certifiability (yes):} what distinguishes the field is therefore not
capability but that its stability is \emph{provable} ---placement
dichotomy, flutter threshold exact under $\mathbf{K}{=}\omega_0^2\mathbf{I}$,
unconditional contraction bound--- and, above all, that the one-step
operator admits an \emph{exact} runtime check. That is a difference of kind,
not of existence: learned recurrences do carry certificates, and for gated
recurrent units in particular there are explicit ISS and incremental-ISS
conditions on the weights \citep{bonassi2021stability}; they are sufficient
and conservative, and obtaining them requires bespoke machinery, whereas the
field's spectral radius is read off directly. We conclude that a dynamic internal field is a viable and
certifiable \emph{compute governor} ---the ``brainstem'' of a cognitive
architecture--- but not an enhancer of cognition: it modulates, it does not
think. A kill-gate with a positively controlled probe finds no evidence for
the remaining route---the field as temporal evidence accumulator---on twelve
frozen solvers of a companion substrate ($\Delta\mathrm{AUC}=+0.0007$
$[-0.0065,+0.0079]$ against a $0.03$ pass threshold); we read this as the
recurrent state already integrating its own history, a mechanism we propose
rather than demonstrate. The null, broad and with a mechanism, delimits which
role a homeostatic field can and cannot play in an adaptive transformer.
\end{abstract}

% =====================================================================
\section{Introduction}
A cognitive architecture is not exhausted by the module that reasons. Alongside
perception, memory and inference, an intelligent system needs a layer that
\emph{governs} its own computation: how much to think before answering, when to
stop, which module to activate, how to distribute effort across parts of the
input of uneven difficulty. In the brain, that function is performed not by the
cortex but by the \emph{neuromodulatory} and homeostatic systems ---brainstem,
hypothalamus, norepinephrine/acetylcholine tone--- which set the gain, the
arousal and the budget of cognition without executing it. This article asks
whether that role of \textbf{metacognitive governor} can be played, in an
adaptive transformer, by a \emph{dynamic internal field}: a low-dimensional
homeostatic state, with explicit physics and certified stability, that
modulates cognition without performing it. (Our substrate is a family of
small decoder-only transformers---$4.2$--$5.6$M parameters, per-variant
counts in the repository---on an
NC$^1$-hard task; the question and the formalism are architecture-generic,
and ``language model'' below refers to the architectural class, not the
scale.)

The concrete motivation is adaptive computation. A standard transformer computes
at fixed depth; under logarithmic precision it is confined to $\mathrm{TC}^0$
\citep{merrill2023parallelism} and only learns ``shortcuts'' that do not
extrapolate \citep{liu2023shortcuts}. Matching depth to difficulty ---Adaptive
Computation Time \citep{graves2016act}, PonderNet \citep{banino2021pondernet},
Universal Transformers \citep{dehghani2019universal}, \emph{looped transformers}
\citep{fan2025looped}, depth-adaptive and early-exit decoding
\citep{elbayad2020depth,schwartz2020right}, and token-level compute routing
\citep{raposo2024mixture}--- is the classic remedy, but in all of them the halting
\emph{policy} is learned end-to-end and can overfit the training distribution.
We propose to anchor that decision in a \emph{dynamic internal field} and ask
what is gained ---and what is not--- by doing so.

Our instance of the governor is the \emph{Homeostatic Background Processor}
(HBP): a low-dimensional field over the model's module graph, governed by a
\emph{family} of PDE operators on the graph Laplacian and advection ---reaction,
diffusion, wave, advection, dispersion and a KdV-type nonlinearity---, which
evolves along the iterations of a recurrent \emph{reasoner} and modulates its
computation through a universal interface: each module exposes interoception
$s_i$ (bottom-up) and receives modulation $m_i$ (top-down), inspired by
biological predictive regulation and neuromodulation
\citep{sterling2012allostasis,barrett2017allostasis,friston2010free,
doya2002metalearning,vecoven2020neuromodulation}
(Fig.~\ref{fig:concept}). The \emph{family} lets us decompose the headline
question into two that the adaptive-computation literature does not isolate:
\emph{(i)~does substance matter}, i.e.\ the \emph{type} of the field's physics
(wave vs.\ diffusion vs.\ KdV, local vs.\ non-local, modulator vs.\ substrate)?
and \emph{(ii)~does structure matter}, i.e.\ the order of the dynamics? We
evaluate with a pre-registered protocol and adversarial verification, and the
answer is threefold: \emph{substance} does not ---the physical regime is
irrelevant for accuracy, a \emph{broad} mechanism null that we explain by
the temporal \emph{coarse-graining} of the gradient and, for the flow substrate,
by an incompressibility limit---; the second-order \emph{structure} does, but
only within one of two generator families once capacity is equalized, and a
matched-interface GRU governor ties the field there and nominally exceeds it
in the other
(Sec.~\ref{sec:v4}); what remains distinctive of the field is neither
substance nor superiority but \emph{certifiability}.

\paragraph{The answer, in one sentence.} A dynamic internal field \emph{can}
govern an LLM's computation ---it is a viable \emph{governor}, with certifiable
stability--- but it \emph{cannot} enhance its cognition: it modulates, it does
not think. And it is not the only thing that can govern: the one learned
alternative we tested---a GRU cell with this same interoceptive
interface---does the job at least as well, matching the field in one generator
family and nominally beating it in the other, so
the field's distinctive contribution is that its stability is provable rather
than merely observed. We thus place
the HBP in the \emph{metacognitive/autonomic} layer of a cognitive architecture
---the computational ``brainstem'', not the cortex--- and characterize
precisely, via a broad null with a mechanism, which role a homeostatic
field can occupy and which it cannot.

\paragraph{Contributions.}
\begin{enumerate}
  \item An architectural \emph{positioning}, backed by evidence: a dynamic
        internal field belongs to the \emph{metacognitive compute-governor}
        layer of an LLM, not to the reasoning layer; the empirical pattern
        (neutral for accuracy; better compute allocation than the
        no-controller baseline, in one of two generator families, and matched
        there by a learned recurrent governor) is what suggests that
        placement (Sec.~\ref{sec:hbp}, \ref{sec:null}).
  \item A novel formalism: a \emph{family} of homeostatic PDE fields over the
        module graph (wave, diffusion, advection, dispersion, saturated KdV
        nonlinearity), with continuous mixing between first- and second-order
        branches and optional \emph{gating} of the physics by interoception
        (Sec.~\ref{sec:hbp}).
  \item A stability characterization of the family's \emph{integrator},
        assembled from classical results with attribution and one new
        criterion. Classical: the placement dichotomy
        (Kelvin--Tait--Chetaev \citep{thomson1879treatise,merkin1997}; its
        first-order half is A-DGN \citep{gravina2023adgn} with
        $\gamma\mathbf{I}\to\mathbf{K}$), the flutter threshold (Bottema's
        criterion \citep{bottema1955stability} at isotropic stiffness and
        damping, and the internal-damping whirl threshold of
        \citet{smith1933motion}), and the unconditionally contractive
        implicit kernel (the logarithmic-norm resolvent bound
        \citep{dahlquist1958stability,soderlind2006logarithmic,bai2003hermitian}).
        New, and proved here (App.~\ref{app:proofs},
        Prop.~\ref{prop:verlet-proof}): a discrete complex Schur--Cohn
        condition for the Verlet integrator with velocity coupling, necessary
        \emph{and} sufficient per latent root and free of any commutation
        hypothesis, with the passage to an operator-level box certificate
        quantified rather than assumed. All are imposed as a
        differentiable penalty during training and audited on trained
        checkpoints with the exact spectral radius
        (Sec.~\ref{sec:theory}, App.~\ref{app:scope}).
  \item Two pre-registered empirical studies on state tracking in $S_5$: a
        first campaign ($n{=}10$) in which the OOD compute-robustness effect
        survives a paired re-run with learnable physics after a
        numerical-precision bug (BF16 freezing), and a twenty-seed
        \emph{deconfounding} campaign that bounds the claim---at equalized caps
        the order effect holds in one generator family and not the other, and a
        matched-interface GRU governor attains it too
        (Secs.~\ref{sec:exp},~\ref{sec:v4}).
  \item A \emph{broad} mechanism null, adversarially verified: none of the members
        of the family moves accuracy, not even in dual tasks designed to excite
        regime switching, nor under \emph{non-local} coupling (Poisson
        enslavement on the graph), nor with the field as a \emph{substrate} of a
        2D Navier--Stokes flow ---where an incompressibility limit
        ($\nabla\!\cdot u{=}0$) prevents delivering information to a point.
        We propose the temporal \emph{coarse-graining} of the gradient as the
        explanation for the modulator null (Sec.~\ref{sec:null}).
\end{enumerate}

% =====================================================================
\section{Related work}
\label{sec:related}

\paragraph{Adaptive computation and learned \emph{halting}.}
The idea of decoupling compute depth from architectural depth goes back to
Adaptive Computation Time (ACT) \citep{graves2016act}; PonderNet
\citep{banino2021pondernet} reformulates it probabilistically, and Universal
Transformers \citep{dehghani2019universal} and \emph{looped transformers}
\citep{fan2025looped,giannou2023looped} carry it to depth recurrence. Our
\emph{reasoner} inherits this paradigm with PonderNet-style halting. The crucial
distinction is that in these works the compute-allocation policy is learned
end-to-end; our study provides direct evidence that such a policy can overfit
(the adaptivity of the \texttt{gating\_wm} control collapses out of distribution
in one of the two generator sets) and that anchoring it in a stateful governor
with an interoceptive interface makes it more robust---though the
deconfounding of Sec.~\ref{sec:v4} shows that the \emph{explicit physics} is
not what buys the robustness: a GRU with the same interface is as robust in
\texttt{adjacent} and nominally more so in \texttt{cycle\_transp}.

\paragraph{Expressivity and state tracking.}
Under logarithmic precision, a fixed-depth transformer sits in $\mathrm{TC}^0$
\citep{merrill2023parallelism}; the word problem over $S_5$ is NC$^1$-complete by
the theorem of \citet{barrington1989}, and transformers learn shortcuts that do not
extrapolate \citep{liu2023shortcuts}. Intermediate generation extends the
expressive power \citep{merrill2024cot}, and state-space models share the
limitation \citep{merrill2024illusion}. This justifies that our task
\emph{requires} iterative computation and motivates the extrapolation axis
\citep{anil2022length,jelassi2023arithmetic}.

\paragraph{External and working memory.}
Neural Turing Machines \citep{graves2014ntm} and the DNC \citep{graves2016dnc}
enable algorithms with addressable memory. Our \texttt{gating\_wm} control
incorporates a working memory in this spirit; our experiments attribute the
in-distribution accuracy gain to the \emph{recurrence} rather than to the
memory (\texttt{gating} $\ge$ \texttt{gating\_wm} in both generator sets) and,
in either case, not to the HBP: a negative that we
make explicit.

\paragraph{Certified learned dynamics, and where our certificate sits.}
An earlier version of this paper did not engage this literature, which is a
serious omission because it is where the bar actually is. Stability guarantees
for learned recurrent models come in two grades. \emph{By construction:}
coRNN and UnICORNN \citep{rusch2021cornn,rusch2021unicornn} are
structure-preserving discretizations of second-order oscillator networks with
proven state and gradient bounds; AntisymmetricRNN
\citep{chang2019antisymmetricrnn} and A-DGN \citep{gravina2023adgn} obtain
stability from antisymmetric parameterization; Recurrent Equilibrium Networks
\citep{revay2024recurrent} give a free parameterization of \emph{all} models
that are contracting and satisfy prescribed incremental IQCs, trainable by
unconstrained gradient descent; Lipschitz RNNs \citep{erichson2021lipschitz}
and the Lyapunov-projected models of \citet{kolter2019learning} do the same by
other routes; LinOSS \citep{rusch2025oscillatory} and its damped extension
\citep{boyer2025dissipate} obtain stable oscillatory state-space layers with a
non-negativity condition, the latter decoupling damping from frequency exactly
as our learnable per-dimension $\zeta$ does; and CON \citep{stolzle2024con}
proves global asymptotic stability \emph{and} input-to-state stability for a
coupled damped-oscillator network in closed loop. \emph{By certificate:}
\citet{bonassi2021stability} give explicit ISS and incremental-ISS conditions
on the weights of a gated recurrent unit, checkable post hoc or imposable
during training.

Two consequences for this paper, both uncomfortable and both stated here
rather than left for a referee. First, our guarantee is of the weaker grade:
we impose a differentiable penalty and probe the operator at evaluation time,
where coRNN, REN and D-LinOSS are stable \emph{by construction}. Second, the
closed-loop result we do not prove ---the field, its host, and the
interoception/modulation path considered as one system--- is exactly what CON
proves for its setting. What is genuinely unoccupied in this literature is the
conjunction of second order, instance-gated (time-varying) coefficients, and
closed loop: REN has the loop without the first two, CON has the first and
third without gating, LinOSS has only the first. We do not fill that gap here;
we mark it.

\paragraph{PDE-governed neural networks on graphs.}
Interpreting layers as discretizations of continuous dynamics is the basis of
Neural ODEs \citep{chen2018node}; on graphs, PDE-GCN \citep{chamberlain2021grand,eliasof2021pdegcn}
derives architectures from the diffusion and wave equations, GraphCON
\citep{rusch2022graphcon} couples oscillators at the nodes (the second-order
dynamics closest to our wave branch), ADR-GNN \citep{eliasof2023adr} adds
advection and reaction, and Anti-Symmetric DGN \citep{gravina2023antisymmetric}
obtains stability by construction with antisymmetric operators. We should be
explicit about how close that last one is: \textbf{the first-order half of our
placement dichotomy \emph{is} A-DGN}, with their $\gamma\mathbf{I}$ generalized
to a symmetric $\mathbf{K}\succeq\omega_0^2\mathbf{I}$. Claiming the
generalization is honest; not flagging the coincidence would not be. Our family
differs on three points: (i)~the field does not transport the task
representation but a low-dimensional \emph{modulatory} state; (ii)~the ``time''
of the PDE is the reasoner's iterations, not the layers; (iii)~the
wave$\leftrightarrow$diffusion mixing and the odd operators (advection, KdV
dispersion) require a stability analysis that the GNN literature does not cover:
incorrect placement of an antisymmetric operator in a second-order dynamics
produces circulatory \emph{flutter}, a classical phenomenon of non-conservative
mechanics \citep{ziegler1952,merkin1997,kirillov2013} that, to our knowledge, had
not been pointed out in this context.

\paragraph{Homeostasis and neuromodulation.}
Differentiable plasticity and neuromodulation
\citep{miconi2018plasticity,miconi2019backpropamine} enable internal modulation
of computation; homeostatic variables coupled to vulnerability signals confer
adaptability under \emph{concept shift} \citep{man2022homeostatic}. The HBP
articulates this intuition as a physically motivated and analyzable dynamical
system, with certified stability and its effect on the compute policy isolated
experimentally.

% =====================================================================
\section{A family of homeostatic fields}
\label{sec:hbp}

\begin{figure}[t]\centering
\includegraphics[width=\textwidth]{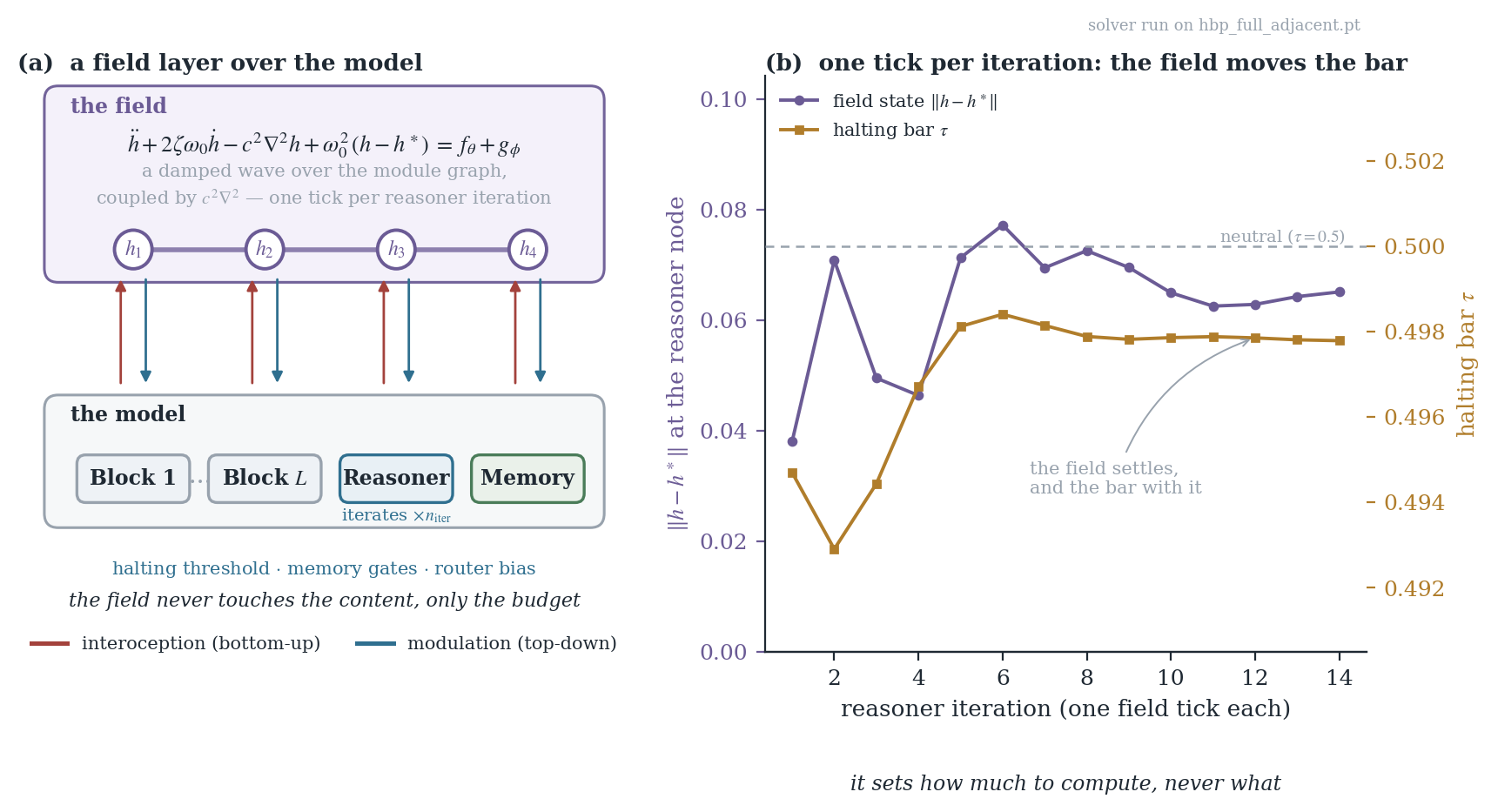}
\caption{The \emph{Homeostatic Background Processor}. An internal-state field
$h$ (top) lives over the model's module graph ---backbone blocks, the
adaptive-depth reasoner and the working memory (bottom)--- coupled by the graph
Laplacian ($-c^2\nabla^2 h$). Each module exposes an \emph{interoception} signal
$s_i$ (bottom-up) and receives \emph{modulation} $m_i$ (top-down: halting
threshold, block gain, memory gates; the interface also exposes a router-bias
head, not consumed by the experiments reported here). The field evolves under a damped-wave PDE
(one instance of the family of Sec.~\ref{sec:hbp}), one tick per reasoner
iteration.}
\label{fig:concept}
\end{figure}

Let there be a graph of $N$ nodes (modules: $L$ backbone blocks, the reasoner
and the working memory) with combinatorial Laplacian
$\mathbf{L}\in\mathbb{R}^{N\times N}$ (symmetric, PSD) and oriented advection
matrix $\mathbf{A}$ (antisymmetric; spectrum $\{\pm i\mu_k\}$), defined over the
chain backbone${}\to{}$reasoner${}\to{}$memory. Each node holds an
\emph{internal state vector} (VEI) $h_i\in\mathbb{R}^{d_h}$; we write
$u:=h-h^\ast$ (deviation from the learnable rest state). Per latent dimension, we
define the operators
\begin{equation}
\mathbf{K}:=\omega_0^2\mathbf{I}+c^2\mathbf{L},\qquad
\mathbf{C}:=2\zeta\omega_0\mathbf{I}+D\,\mathbf{L},\qquad
\mathbf{G}:=b\,\mathbf{A}+\beta\,\mathbf{A}^3,
\end{equation}
(stiffness: reaction $+$ spatial diffusion; dissipation: uniform $+$ structural;
antisymmetric: advection $+$ third-order \emph{dispersion}, the only odd spatial
operator well defined over an oriented graph, with $(i\mu)^3=-i\mu^3$), and the
KdV-type nonlinear perturbation
\begin{equation}
r(u):=-\nu\,\tanh(u)\odot\tanh(\mathbf{A}u),
\end{equation}
the \emph{saturated} $u\,\partial_x u$ term: $r(0)=0$ with null Jacobian at the
equilibrium (same quadratic order as KdV near $u=0$) and globally bounded by
$\nu$. The family consists of two branches coupled by a convex mixture
$\alpha\in[0,1]$:
\begin{align}
\text{(wave, 2nd order):}\quad &
\ddot u + (\mathbf{C}+\mathbf{G})\,\dot u + \mathbf{K}u
   = r(u) + f_\theta(h,s) + g_\phi(h,x),
\label{eq:wave}\\
\text{(diffusion, 1st order):}\quad &
\gamma\,\dot u + (\mathbf{K}+\mathbf{G})\,u
   = r(u) + f_\theta(h,s) + g_\phi(h,x),
\label{eq:diff}
\end{align}
where $f_\theta$ is a bounded self-check ($\tanh$, gain $\le 0.3$) that consumes
the interoception $s$, $g_\phi$ a bounded external forcing from the input, and
$\gamma$ a learnable rate of the diffusive branch (decoupled from $\zeta$). Note
the distinct \emph{placement} of $\mathbf{G}$: gyroscopic (on $\dot u$) in
\eqref{eq:wave}, positional (on $u$) in \eqref{eq:diff};
Lemma~\ref{lem:placement} shows it is the only stable choice in each branch. The
state advances by
\begin{equation}
h_{n+1}=\alpha\, \Phi_{\mathrm{wave}}(h_n,h_{n-1})
        +(1-\alpha)\,\Phi_{\mathrm{diff}}(h_n),
\end{equation}
with $\Phi_{\mathrm{wave}}$ a position Verlet step and $\Phi_{\mathrm{diff}}$ an
IMEX \emph{backward-Euler} step (stiff part $\mathbf{K}+\mathbf{G}$ implicit;
Prop.~\ref{prop:imex}). This differs from the classical implicit--explicit
split of \citet{ascher1995imex}, where the stiff \emph{symmetric} diffusive
term is taken implicitly and the advective, antisymmetric term explicitly ---
with the conditional stability that entails. Here the antisymmetric operator
goes \emph{inside} the implicit kernel, which is what the coercivity argument
of Prop.~\ref{prop:imex} makes unconditionally safe.
The coefficient $\alpha$ can be (i) an architectural
constant ($\alpha{=}1$: wave; $\alpha{=}0$ or order 1: diffusion), (ii) imposed
per instance (experimental oracles), or (iii) \emph{gated by interoception}
together with $D$ and $b$ (``homeostasis chooses its physics''), the hypothesis
that Section~\ref{sec:null} evaluates and refutes for accuracy.

\paragraph{Instantiated physics.} We evaluate three members of the family:
\texttt{hbp\_full} (damped wave: $\alpha{=}1$, $\mathbf{G}{=}0$, $D{=}0$),
\texttt{hbp\_first} (diffusive relaxation: order 1, overdamped limit), and
\texttt{hbp\_kdv} (wave $+$ dispersion $+$ KdV nonlinearity: $\alpha{=}1$,
$\beta_{\max}{=}0.1$, $\nu_{\max}{=}0.3$); plus the gated variant
\texttt{hbp\_mix}. At $N{=}6$ the dispersion has only three pairs of modes:
there is no solitonic regime, and we make it explicit as a limit of the
testbed.

\paragraph{``Time'' is thinking.} The HBP ticks once per reasoner iteration
(PonderNet-style halting, up to $N_{\max}{=}24$). At iteration $n$: (i) the
field modulates the reasoner block with a soft perturbation
$1+s_{\mathrm{mod}}\tanh(\cdot)$ around the identity; (ii) per-node interoception
$s_n$ is collected (progress, activation, effort, attention entropy, valid
fraction); (iii) one step of the family is integrated; (iv) the VEI biases the
halting threshold and the write/forget gates of the working memory. The physical
parameters ($\omega_0,\zeta,c,D,b,\beta,\nu,\gamma$) are learnable per latent
dimension within safe ranges (squash), and are stored in FP32
(Sec.~\ref{sec:bf16}).

% =====================================================================
\section{Stability analysis of the family}
\label{sec:theory}

We define the per-dimension energy
$E(u,\dot u)=\tfrac12\lVert\dot u\rVert^2+\tfrac12\,u^\top\mathbf{K}u$
(kinetic $+$ restoring well $+$ graph Dirichlet energy). In the autonomous case
of \eqref{eq:wave} with $\mathbf{G}=0$,
$\dot E=-\dot u^\top\mathbf{C}\dot u\le 0$ (strict dissipation with
$\zeta\ge\zeta_{\min}>0$). The nontrivial question is where the
\emph{antisymmetric} operators can enter without destroying this structure.

\begin{lemma}[Placement of antisymmetric operators]
\label{lem:placement}
Let $\mathbf{K}=\mathbf{K}^\top\succ0$, $\mathbf{C}=\mathbf{C}^\top\succ0$ and
$\mathbf{G}=-\mathbf{G}^\top$ with spectrum $\{i\mu_k\}$. Then:
\begin{itemize}
  \item[(a)] \textbf{(gyroscopic: safe).} For
  $\ddot u+(\mathbf{C}+\mathbf{G})\dot u+\mathbf{K}u=0$, the energy satisfies
  $\dot E=-\dot u^\top\mathbf{C}\dot u\le0$, since
  $\dot u^\top\mathbf{G}\dot u=0$ (the gyroscopic force does no work); by
  LaSalle, the origin is globally and asymptotically stable \emph{for all}
  $b,\beta$ (Kelvin--Tait--Chetaev theorem \citep{merkin1997,kirillov2013}).
  \item[(b)] \textbf{(positional in first order: safe).} For
  $\gamma\dot u+(\mathbf{K}+\mathbf{G})u=0$, every eigenvalue $\lambda$ of
  $-(\mathbf{K}+\mathbf{G})/\gamma$ satisfies
  $\operatorname{Re}\lambda=-(v^*\mathbf{K}v)/\gamma\le-\omega_0^2/\gamma<0$
  unconditionally in $\lVert\mathbf{G}\rVert$ (numerical range:
  $v^*\mathbf{G}v\in i\mathbb{R}$). With $\mathbf{K}=0$,
  $\sigma(-\mathbf{G})\subset i\mathbb{R}$: conservative transport and
  dispersion, with the KdV dispersion relation on the graph.
  \item[(c)] \textbf{(circulatory: flutter, with a threshold that is exact only
  in the isotropic case).} For
  $\ddot u+\mathbf{C}\dot u+(\mathbf{K}+\mathbf{G})u=0$ with
  $\mathbf{K}=\omega_0^2\mathbf{I}$ \emph{and} $\mathbf{C}=2\zeta\omega_0\mathbf{I}$
  (equal frequencies and isotropic damping: the degenerate Merkin
  case \citep{merkin1997}; this is the per-dimension structure of the field only
  when $c{=}0$ and there is no structural diffusion, which no run of ours
  satisfies ---see App.~\ref{app:scope}, where we also note that the headline
  arm sets $\mathbf{G}=0$ outright, making this a design-exclusion result
  rather than a certificate of a system we ran) and
  $\mathbf{G}=\beta\mathbf{A}^3$, the system decouples in the unitary basis of
  $\mathbf{G}$ and the Routh--Hurwitz criterion with complex coefficients gives
  asymptotic stability \emph{if and only if}
  $\beta\,\rho(\mathbf{A}^3)<2\zeta\omega_0^2$. Under these hypotheses this is
  Bottema's criterion \citep{bottema1955stability} evaluated at isotropic
  stiffness and damping, and it coincides with the internal-damping whirl
  threshold of \citet{smith1933motion}; we recall it rather than claim it. The
  threshold vanishes with the
  damping (as $\zeta\to0$, every $\beta>0$ destabilizes \citep{ziegler1952}) and
  grows only linearly with $\zeta\omega_0^2$. Note that the isotropic case is
  the one in which the celebrated destabilization paradox does \emph{not}
  appear: the threshold tends to zero continuously with $\zeta$, with none of
  the discontinuous drop that makes the general case interesting.
  \item[(d)] \textbf{(nonlinear perturbation).} $r$ satisfies $r(0)=0$,
  $Dr(0)=0$ and $\lVert r(u)\rVert\le\nu$: the linearization at the equilibrium
  ---and with it (a)--(c) and the discrete certificates--- is not altered; in
  \eqref{eq:diff} the origin is GAS if $\nu(1+\rho(\mathbf{A}))<\omega_0^2$
  ---the Jacobian of the saturating perturbation contributes a diagonal
  \emph{and} an $\mathbf{A}$-coupled term, so the Lipschitz constant is
  $\nu(1+\rho(\mathbf{A}))$, not $\nu\rho(\mathbf{A})$ as an earlier version
  of this lemma stated; outside that condition we claim only local stability
  and do not exhibit the basin. On the invariant set induced by the
  state projection ---a hard clamp in the implementation--- $r+f_\theta+g_\phi$
  acts as a bounded forcing and both branches are BIBS.
\end{itemize}
\end{lemma}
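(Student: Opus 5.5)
I will prove the four items separately, using two tools throughout: energy arguments for the second-order system and numerical-range arguments for eigenvalues.

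\emph{Item (a).} Differentiate $E$ along trajectories and substitute $\ddot u=-(\mathbf{C}+\mathbf{G})\dot u-\mathbf{K}u$. The cross terms $\dot u^\top\mathbf{K}u$ cancel. Because $\mathbf{G}$ is antisymmetric, $\dot u^\top\mathbf{G}\dot u=0$. This leaves $\dot E=-\dot u^\top\mathbf{C}\dot u\le 0$. Since $\mathbf{K}\succ0$, $E$ is radially unbounded, so sublevel sets are compact and LaSalle applies. On the set $\{\dot E=0\}$, $\mathbf{C}\succ0$ forces $\dot u\equiv0$. Then $\ddot u=0$, which gives $\mathbf{K}u=0$ and hence $u=0$. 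The largest invariant set is therefore the origin, and the system is GAS for every $b,\beta$.

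\emph{Item (b).} Take an eigenpair $(\lambda,v)$ of $-(\mathbf{K}+\mathbf{G})/\gamma$ with $\lVert v\rVert=1$. Then $\lambda\gamma=-v^*\mathbf{K}v-v^*\mathbf{G}v$. The first term is real and at least $\omega_0^2$, because $\mathbf{K}=\omega_0^2\mathbf{I}+c^2\mathbf{L}$ with $\mathbf{L}\succeq0$. The second term is purely imaginary by antisymmetry. Taking real parts gives the bound, with no dependence on $\lVert\mathbf{G}\rVert$. For $\mathbf{K}=0$, the spectrum is $\sigma(-\mathbf{G}/\gamma)\subset i\mathbb{R}$. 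Substituting the eigenvalues $\pm i\mu_k$ of $\mathbf{A}$ into $b\mathbf{A}+\beta\mathbf{A}^3$ gives the frequencies $b\mu-\beta\mu^3$, which is the discrete KdV dispersion relation.

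\emph{Item (c).} Because $\mathbf{K}$ and $\mathbf{C}$ are multiples of $\mathbf{I}$, they commute with the normal matrix $\mathbf{G}$. Diagonalizing $\mathbf{G}$ unitarily splits the system into scalar equations $\ddot z+2\zeta\omega_0\dot z+(\omega_0^2+i\sigma)z=0$, where $\sigma=\pm\beta\mu_k^3$. The characteristic polynomial is $\lambda^2+a_1\lambda+a_0$ with $a_1=2\zeta\omega_0$ real and $a_0=\omega_0^2+i\sigma$. I will apply the complex Routh--Hurwitz (Bilharz) criterion, or solve the quadratic directly with $\lambda=-\zeta\omega_0\pm\sqrt{\zeta^2\omega_0^2-\omega_0^2-i\sigma}$. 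Both roots lie in the open left half-plane if and only if $\operatorname{Re}\sqrt{\cdot}<\zeta\omega_0$. Writing $w=\sqrt{\cdot}=x+iy$, we have $2xy=-\sigma$ and $x^2-y^2=\zeta^2\omega_0^2-\omega_0^2$. Setting $x=\zeta\omega_0$ at the boundary gives $y=-\sigma/(2\zeta\omega_0)$. The boundary condition then reduces to $\sigma^2=4\zeta^2\omega_0^4$, that is, $|\sigma|=2\zeta\omega_0^2$. Maximizing $|\sigma|$ over $k$ gives the threshold $\beta\rho(\mathbf{A}^3)<2\zeta\omega_0^2$. I will check the direction of the inequality at $\sigma=0$, where the system is stable. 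Continuity then fixes the sign, and the same computation shows the threshold vanishes as $\zeta\to0$. This algebra is the main obstacle: the principal branch of the complex square root has to be tracked carefully so that the ``iff'' is genuinely two-sided. I will therefore cross-check against the Bilharz determinant condition $a_1^2\operatorname{Re}a_0>(\operatorname{Im}a_0)^2$, applied after the standard $i$-rotation.

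\emph{Item (d).} First, $r(0)=0$ is immediate. The Jacobian is $Dr(u)=-\nu[\operatorname{diag}(\operatorname{sech}^2u\odot\tanh\mathbf{A}u)+\operatorname{diag}(\tanh u\odot\operatorname{sech}^2\mathbf{A}u)\mathbf{A}]$, and it vanishes at $u=0$ because $\tanh 0=0$. Both factors of $r$ are bounded by one, so $\lVert r\rVert_\infty\le\nu$. The same entrywise bounds give $\lVert Dr\rVert\le\nu(1+\rho(\mathbf{A}))$, using $\lVert\mathbf{A}\rVert=\rho(\mathbf{A})$ since $\mathbf{A}$ is normal. For global stability of the first-order branch, I will use $V=\tfrac12\lVert u\rVert^2$ together with the mean-value bound $\lVert r(u)\rVert\le\nu(1+\rho(\mathbf{A}))\lVert u\rVert$. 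Then $\gamma\dot V\le-(\omega_0^2-\nu(1+\rho(\mathbf{A})))\lVert u\rVert^2$, and the $\mathbf{G}$ term drops out again by antisymmetry. Finally, for BIBS: on the clamped invariant set the forcing $r+f_\theta+g_\phi$ is bounded. Combining this with the exponential stability of the linear part from (a) and (b) and variation of constants gives bounded states.
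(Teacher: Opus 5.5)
Your proposal is correct. For (a) it follows the paper's argument: the energy identity with $\dot u^\top\mathbf{G}\dot u=0$. It is in fact more complete than the appendix, which writes that identity only for isotropic damping $2\zeta\omega_0\mathbf{I}$ and does not spell out the LaSalle invariance step you give. For (b) the appendix uses the norm identity $\tfrac{d}{dt}\tfrac12\lVert u\rVert^2=-u^\top\mathbf{K}u+u^\top F$, whereas you use the eigenpair/numerical-range computation as the statement phrases it. Both rest on $x^*\mathbf{G}x\in i\mathbb{R}$, and you recover the norm identity anyway in (d).

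The genuine difference is in (c). The paper does not solve the quadratic. It starts from $\beta=0$ and uses continuity of the roots in $\beta$. It locates the only possible imaginary-axis crossing by substituting $\lambda=i\omega$, which gives $\omega=\pm\omega_0$ and $\beta|\nu|=2\zeta\omega_0^2$. It then settles the unstable side by the sign of $\partial\operatorname{Re}\lambda/\partial\beta$ at the crossing.

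Your route is purely algebraic and two-sided at once, with no path or transversality argument. It uses the principal-branch root $-\zeta\omega_0+\sqrt{\cdot}$, or equivalently the Bilharz inequality $a_1^2\operatorname{Re}a_0>(\operatorname{Im}a_0)^2$, which here reads $4\zeta^2\omega_0^4>\sigma^2$. The crossing method's advantage is that it needs no closed-form roots, so it adapts to characteristic polynomials that do not factor into decoupled quadratics.

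One caution: ``check at $\sigma=0$ and use continuity'' by itself certifies only the stable side. The unstable side needs one of two facts. The first is the monotonicity in $|\sigma|$ of $\operatorname{Re}\sqrt{a-i\sigma}=\sqrt{(\sqrt{a^2+\sigma^2}+a)/2}$, with $a=\zeta^2\omega_0^2-\omega_0^2$. The second is the Bilharz test itself. So make that the primary argument rather than a cross-check.

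For (d) the paper gives no proof at all. Your Jacobian computation, the $V=\tfrac12\lVert u\rVert^2$ estimate with $u^\top\mathbf{G}u=0$, and the variation-of-constants BIBS argument supply what is missing. Reading $\lVert r(u)\rVert\le\nu$ in the sup norm, as you do, is the only reading under which it holds. Your sector bound can also be sharpened to $|u^\top r(u)|\le\nu\lVert u\rVert^2$, since $u_i\tanh u_i\ge0$ and $|\tanh u_i|\le|u_i|$. The condition $\nu(1+\rho(\mathbf{A}))<\omega_0^2$ is nonetheless all the lemma claims, and your proof establishes it.
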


Lemma~\ref{lem:placement} justifies the placement of \eqref{eq:wave} and
\eqref{eq:diff} and invalidates the circulatory alternative: at our operating
point ($\beta_{\max}{=}0.1$, $\rho(\mathbf{A}^3){=}5.85$) the threshold of (c),
used here as the design guide it is for $c>0$ (App.~\ref{app:scope}), is
violated over almost the entire admissible parameter box, and the free
simulation confirms the flutter (divergence) against the gyroscopic decay.

\begin{proposition}[Discrete stability of the wave branch]
\label{prop:verlet}
For the position Verlet
$u_{t+1}=(2\mathbf{I}-\Delta t^2\mathbf{K}-\Delta t(\mathbf{C}+\mathbf{G}))u_t
-(\mathbf{I}-\Delta t(\mathbf{C}+\mathbf{G}))u_{t-1}$, every latent eigenvalue
$z$ solves the complex scalar polynomial
$z^2-(2-q-g-i\tilde\mu)z+(1-g-i\tilde\mu)=0$ with
$q=\Delta t^2\,x^*\mathbf{K}x$, $g=\Delta t\,x^*\mathbf{C}x$ real and
$\tilde\mu=\Delta t\,\mathrm{Im}(x^*\mathbf{G}x)$,
$|\tilde\mu|\le\Delta t\,\rho(\mathbf{G})$,
$\rho(\mathbf{G})=\max_k|b\mu_k-\beta\mu_k^3|$. For $q>0$, Cohn's criterion
for the complex quadratic gives $|z|<1$ \emph{if and only if}
\begin{equation}\label{eq:schurcohn}
\text{(i)}\;\;\tilde\mu^2< g(2-g),\qquad
\text{(ii)}\;\;q\,(g^2+\tilde\mu^2)< 2g\,\big(g(2-g)-\tilde\mu^2\big).
\end{equation}
Two remarks on scope, both proved in App.~\ref{app:proofs}. First, the
reduction to \eqref{eq:schurcohn} is a \emph{latent-root} argument and
therefore needs \textbf{no commutation hypothesis}: each eigenvalue satisfies
the scalar polynomial with the Rayleigh quotients of its own latent vector,
whether or not $\mathbf{K}$, $\mathbf{C}$ and $\mathbf{G}$ are simultaneously
diagonalizable. Second, the equivalence is per latent root; certifying the
whole operator requires the triples $(q,g,\tilde\mu)$ to be controlled, and
bounding them by the spectra of $\mathbf{K},\mathbf{C},\mathbf{G}$ replaces
their joint numerical range by a \emph{box} that contains it, so the box form
is sufficient but not necessary ---and, we find, markedly conservative.
At $\tilde\mu=0$, (ii) reduces to the classical damped-Verlet criterion
$q<4(1-\zeta\omega_0\Delta t)$. Condition (i) is \emph{genuinely discrete}: the
gyroscopic term discretized with a backward difference does not inherit the
continuous neutrality, but injects energy at the exact rate
$\sqrt{1+\tilde\mu^2}$ per step, which the dissipation must absorb; as
$\Delta t\to0$ the condition empties ($\tilde\mu^2=O(\Delta t^2)$ against
$g(2-g)=O(\Delta t)$) and the continuum is recovered. A Cayley discretization of
the gyroscopic term \citep{iserles2000lie} would restore exact neutrality at the
cost of an implicit step.
\end{proposition}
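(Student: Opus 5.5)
The plan has three steps: reduce to a scalar complex quadratic via latent roots, apply the complex Schur--Cohn test to that quadratic, and then read off the remarks.

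\emph{Latent-root reduction.} The recurrence is a matrix quadratic. Any eigenvalue $z$ of its companion operator has a unit vector $x$ in the kernel of
\begin{equation*}
P(z)=z^2\mathbf{I}-z\bigl(2\mathbf{I}-\Delta t^2\mathbf{K}-\Delta t(\mathbf{C}+\mathbf{G})\bigr)+\bigl(\mathbf{I}-\Delta t(\mathbf{C}+\mathbf{G})\bigr).
\end{equation*}
Multiplying $P(z)x=0$ on the left by $x^*$ gives the scalar polynomial of the statement. Its coefficients are the Rayleigh quotients $x^*\mathbf{K}x$, $x^*\mathbf{C}x$ and $x^*\mathbf{G}x$ of that same $x$. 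Since $\mathbf{K}$ and $\mathbf{C}$ are symmetric, $q$ and $g$ are real. Since $\mathbf{G}$ is antisymmetric, $x^*\mathbf{G}x\in i\mathbb{R}$, so $\Delta t\,x^*\mathbf{G}x=i\tilde\mu$. Finally, $|x^*\mathbf{G}x|\le\rho(\mathbf{G})$ because $\mathbf{G}$ is normal, and $\rho(\mathbf{G})=\max_k|b\mu_k-\beta\mu_k^3|$ by the functional calculus on $\sigma(\mathbf{A})=\{\pm i\mu_k\}$. No commutation is used anywhere, which proves the first remark. The second remark also follows: replacing the joint numerical range of $(q,g,\tilde\mu)$ by the product of intervals gives only a sufficient condition.

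\emph{Schur--Cohn for the complex quadratic.} Write the polynomial as $p(z)=z^2+a_1z+a_0$ with $a_0=1-g-i\tilde\mu$ and $a_1=-(2-q-g-i\tilde\mu)$. I will use the complex Schur--Cohn (Jury) test for degree two. Both roots lie in the open unit disk if and only if
\begin{equation*}
|a_0|<1\quad\text{and}\quad\bigl|a_1-\bar a_1a_0\bigr|<1-|a_0|^2 .
\end{equation*}
Equivalently, the reduced linear polynomial $(p(z)-a_0\,p^*(z))/z$ has its root inside the disk.

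The first condition is $(1-g)^2+\tilde\mu^2<1$, which is exactly (i). For the second, write $a_1=-(1+a_0-q)$ and note $a_0\bar a_0=|a_0|^2$. Then
\begin{equation*}
a_1-\bar a_1a_0=-(1-|a_0|^2)+q(1-a_0)+(a_0-\bar a_0)=-(1-|a_0|^2)+q(g+i\tilde\mu)-2i\tilde\mu .
\end{equation*}
Set $R:=1-|a_0|^2=g(2-g)-\tilde\mu^2>0$. The second condition becomes
\begin{equation*}
(qg-R)^2+\tilde\mu^2(q-2)^2<R^2 .
\end{equation*}
Expanding and cancelling $R^2$, then dividing by $q>0$, should reduce this to
\begin{equation*}
q(g^2+\tilde\mu^2)<2gR+4\tilde\mu^2-4\tilde\mu^2/q .
\end{equation*}

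\emph{Main obstacle.} This last inequality is not literally (ii): it carries the extra term $4\tilde\mu^2(1-1/q)$. So the hardest step is getting the algebra of the second Schur--Cohn condition exactly right. The discrepancy must come either from my choice of the sign of $\mathbf{G}$ in the $u_{t-1}$ coefficient or from an arithmetic slip. I would first recheck the expansion symbolically, and then verify it at $\tilde\mu=0$. There it must collapse to $qg<2g(2-g)$, which is $q<4(1-\zeta\omega_0\Delta t)$ once $g=2\zeta\omega_0\Delta t$, matching the stated classical limit.

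\emph{Remaining remarks.} The energy-injection rate is read off from $|a_0|^2=(1-g)^2+\tilde\mu^2$: with $g=0$, $|a_0|=\sqrt{1+\tilde\mu^2}$, which is the per-step determinant growth. The continuum limit follows from the scalings $\tilde\mu^2=O(\Delta t^2)$ and $g(2-g)=O(\Delta t)$.
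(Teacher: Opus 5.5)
Your route is the paper's own. The latent-root reduction via $x^*P(z)x=0$, the coefficients $a_0=(1-g)-i\tilde\mu$ and $a_1=-(2-q-g)+i\tilde\mu$, Cohn's two conditions, and the derivation of (i) from $|a_0|<1$ all match the appendix proof. The gap is in the one step that carries the content of the proposition: you never establish (ii). The cause is an arithmetic slip, not a sign convention. From your own relation $a_1=-(1+a_0-q)$ with $q$ real, you get $\bar a_1=-(1+\bar a_0-q)$ and $\bar a_1a_0=-a_0-|a_0|^2+qa_0$, so
\[
a_1-\bar a_1a_0=-(1-|a_0|^2)+q(1-a_0)=(qg-R)+i\,q\tilde\mu .
\]
There is no $(a_0-\bar a_0)=-2i\tilde\mu$ term, and the imaginary part is $q\tilde\mu$, not $(q-2)\tilde\mu$. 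Since $R>0$ under (i), the second Cohn condition is equivalent to $(qg-R)^2+q^2\tilde\mu^2<R^2$, i.e.\ $q^2(g^2+\tilde\mu^2)<2qgR$. Dividing by $q>0$ gives exactly (ii). This is the paper's computation, written there with $s=2-q-g$.

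The inequality you did reach is not a disguised form of (ii); it is wrong as a stability criterion. Take $g=\tfrac12$, $\tilde\mu^2=\tfrac12$, $q=0.3$. Then $R=\tfrac14$, both (i) and (ii) hold ($0.225<0.25$), and the exact roots have moduli $\approx0.875$ and $\approx0.990$. Yet your condition $(qg-R)^2+\tilde\mu^2(q-2)^2<R^2$ fails badly ($1.455$ against $0.0625$).

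Neither diagnostic you propose would have found the slip. The sign of $\mathbf{G}$ is not at fault, since $a_1=-(1+a_0-q)$ already encodes the recursion's signs correctly. The check at $\tilde\mu=0$ is blind to it, because the spurious term vanishes there. A single comparison against exact roots at some $\tilde\mu\neq0$ would have exposed it. Once that line is corrected, the rest of your proposal goes through as written: the scope remarks, the reading $|a_0|=\sqrt{1+\tilde\mu^2}$ at $g=0$, and the $\tilde\mu=0$ limit.
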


This is the one result in this paper for which we have not found a precedent;
the placement dichotomy and the contraction bound are classical (App.~B). We
prove it in App.~\ref{app:proofs} and check it in
\texttt{experiments/verify\_verlet\_schurcohn.py}: the symbolic identity, $302\,400$ cells
of a $(q,g,\tilde\mu)$ grid against exact roots with zero discrepancies, and
$400$ draws of non-commuting $(\mathbf{K},\mathbf{C},\mathbf{G})$ ---none of
the $400$ pairs commuted--- with a maximum latent-polynomial residual of
$6\cdot10^{-14}$.

\begin{proposition}[IMEX diffusive branch with implicit antisymmetric kernel]
\label{prop:imex}
With the antisymmetric part inside the implicit kernel,
$\mathbf{M}_G:=\mathbf{I}+\lambda(\mathbf{K}+\mathbf{G})$, $\lambda=\Delta t/\gamma$,
one has $\operatorname{Re}\langle x,\mathbf{M}_Gx\rangle\ge(1+\lambda\omega_0^2)\lVert x\rVert^2$
for all $x$, hence $\lVert\mathbf{M}_G^{-1}\rVert_2\le(1+\lambda\omega_0^2)^{-1}<1$
\emph{unconditionally} in $\gamma,\omega_0,c,b,\beta$ and with no commutation
hypothesis between $\mathbf{L}$ and $\mathbf{G}$. (With $\mathbf{G}$ explicit
this claim is false; the Cayley variant on the antisymmetric part preserves
exactly the norm of the conservative transport.)
\end{proposition}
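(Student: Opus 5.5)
The plan is a numerical-range (coercivity) argument followed by the standard lower bound on the norm of an accretive operator; no spectral decomposition is needed, so commutation never enters.

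\textbf{Step 1: coercivity.} Fix $x\in\mathbb{C}^N$ and expand
\[
\langle x,\mathbf{M}_Gx\rangle=\lVert x\rVert^2+\lambda\,x^*\mathbf{K}x+\lambda\,x^*\mathbf{G}x .
\]
I then treat the three terms separately.
\begin{itemize}
\item $\mathbf{K}=\omega_0^2\mathbf{I}+c^2\mathbf{L}$ is real symmetric, so $x^*\mathbf{K}x$ is real. Since $\mathbf{L}\succeq0$ (combinatorial Laplacian), we have $x^*\mathbf{K}x\ge\omega_0^2\lVert x\rVert^2$.
\item $\mathbf{A}$ is real antisymmetric, so $\mathbf{A}^3$ is too, because $(\mathbf{A}^3)^\top=(-\mathbf{A})^3=-\mathbf{A}^3$. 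Hence $\mathbf{G}=b\mathbf{A}+\beta\mathbf{A}^3$ is real antisymmetric, i.e.\ skew-Hermitian. Then $\overline{x^*\mathbf{G}x}=x^*\mathbf{G}^*x=-x^*\mathbf{G}x$, so $x^*\mathbf{G}x\in i\mathbb{R}$. This is the same numerical-range fact used in Lemma~\ref{lem:placement}(b).
\end{itemize}
Since $\lambda=\Delta t/\gamma>0$, taking real parts gives
\[
\operatorname{Re}\langle x,\mathbf{M}_Gx\rangle\ge(1+\lambda\omega_0^2)\lVert x\rVert^2 .
\]
The bound involves no $b$, $\beta$ or $c$, which is the source of unconditionality in those parameters.

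\textbf{Step 2: from coercivity to the resolvent bound.} By Cauchy--Schwarz,
\[
(1+\lambda\omega_0^2)\lVert x\rVert^2\le\operatorname{Re}\langle x,\mathbf{M}_Gx\rangle\le\lVert x\rVert\,\lVert\mathbf{M}_Gx\rVert ,
\]
so $\lVert\mathbf{M}_Gx\rVert\ge(1+\lambda\omega_0^2)\lVert x\rVert$. This has three consequences.
\begin{itemize}
\item $\mathbf{M}_G$ is injective, hence invertible, being square.
\item Setting $x=\mathbf{M}_G^{-1}y$ gives $\lVert\mathbf{M}_G^{-1}y\rVert\le(1+\lambda\omega_0^2)^{-1}\lVert y\rVert$.
\item The bound is strictly below $1$ whenever $\omega_0>0$, which the squashed parameter ranges guarantee.
\end{itemize}
This is exactly the logarithmic-norm resolvent estimate: $\mu_2(-\mathbf{K}-\mathbf{G})\le-\omega_0^2$, and $\lVert(\mathbf{I}-\lambda\mathbf{B})^{-1}\rVert\le(1-\lambda\mu_2(\mathbf{B}))^{-1}$. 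I would cite \citet{dahlquist1958stability,soderlind2006logarithmic} for it rather than claim it. The estimate holds for every $x$ with only the symmetric/skew split, so no simultaneous diagonalization of $\mathbf{L}$ and $\mathbf{G}$ is used.

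\textbf{Step 3: the parenthetical claims.}
\begin{itemize}
\item \emph{Failure with $\mathbf{G}$ explicit.} The step operator becomes $(\mathbf{I}+\lambda\mathbf{K})^{-1}(\mathbf{I}-\lambda\mathbf{G})$. I would take the scalar mode $\mathbf{K}=\omega_0^2$, $\mathbf{G}\mapsto i\mu$, whose amplification factor has modulus $\sqrt{1+\lambda^2\mu^2}/(1+\lambda\omega_0^2)$. This exceeds $1$ once $\lambda\mu^2>2\omega_0^2+\lambda\omega_0^4$, and $\mu=b\mu_k-\beta\mu_k^3$ can be made that large within the parameter box for fixed $\lambda$. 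So the bound, and even contractivity, is not unconditional.
\item \emph{Cayley variant.} $(\mathbf{I}+\tfrac{\lambda}{2}\mathbf{G})^{-1}(\mathbf{I}-\tfrac{\lambda}{2}\mathbf{G})$ is orthogonal for skew $\mathbf{G}$, since the two factors commute and are adjoints up to sign. It therefore preserves the norm exactly.
\end{itemize}

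\textbf{Main obstacle.} The only delicate point is honesty about scope. The proposition bounds the linear implicit kernel only: the IMEX step also carries the explicit terms $r+f_\theta+g_\phi$. Contraction of the full map therefore needs the Lipschitz condition of Lemma~\ref{lem:placement}(d), e.g.\ $\lambda\nu(1+\rho(\mathbf{A}))<\lambda\omega_0^2$, or more loosely a combined bound below $1$. I would state this as a remark and not fold it into the proposition.
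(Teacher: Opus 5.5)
Your proof is correct and follows the paper's own argument (App.~\ref{app:proofs}, Prop.~\ref{prop:imex-proof}) essentially step for step. $x^*\mathbf{G}x\in i\mathbb{R}$ removes the skew part from $\operatorname{Re}\,x^*\mathbf{M}_Gx$, $\mathbf{K}\succeq\omega_0^2\mathbf{I}$ gives coercivity, and Cauchy--Schwarz yields $\lVert\mathbf{M}_Gx\rVert\ge(1+\lambda\omega_0^2)\lVert x\rVert$, hence invertibility and the inverse bound with no commutation used. Your scalar-mode counterexample for explicit $\mathbf{G}$ and the Cayley orthogonality check are correct additions that the appendix does not write out. In your scope remark, note that the state-dependent forcing includes $f_\theta(h,s)$ as well as $r$; the relevant condition is therefore the paper's $L_F<\omega_0^2$ on the total Lipschitz constant, not Lemma~\ref{lem:placement}(d)'s bound on $r$ alone.
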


\begin{remark}
The force $-\mathbf{G}\dot u$ in the wave branch is of Coriolis type: it
preserves the stability structure of the antisymmetric operators, not their
literal transport semantics, which survive only in the first-order branch.
\end{remark}

\begin{remark}
The stability of the convex mixture $\alpha$ of the discrete maps does not
follow from the continuous lemma (the per-branch Lyapunov functions are
distinct), and ---correcting an earlier version of this remark--- neither does
it follow from Propositions~\ref{prop:verlet} and~\ref{prop:imex}: the
spectral radius is not convex, so per-branch bounds do not transfer to the
mixture, and a norm bound cannot be combined with a spectral-radius bound for
a non-normal companion. The runtime probe builds the companion of the wave
branch only. \textbf{The gated mixture arm therefore has neither an analytic
nor an exact numerical certificate}, and we report its results on that
footing. We attempted to repair this with the standard instrument ---a common
quadratic Lyapunov function obtained by semidefinite programming over the
vertices of the coefficient box--- and report the outcome in
App.~\ref{app:lmi}: it succeeds for the wave branch on the region the trained
models occupy, and it fails for the mixture. The per-branch conditions are
imposed as a \emph{differentiable penalty} during training (on the learned
coefficients, or their caps if gated) and verified at evaluation time with the
exact spectral radius of the $2N{\times}2N$ \emph{companion} matrix per dimension
(covers the non-normal case $[\mathbf{L},\mathbf{A}]\ne0$). Under bounded
forcing, the VEI is expected to be ISS to a ball around $h^\ast$ by a standard
argument \citep{sontag1989smooth}, which we do not prove here and on which
nothing below depends. The energy identity, the flutter threshold and the
discrete criterion were validated numerically against
the implemented operators at fixed parameter draws (energy identity to residual $4{\cdot}10^{-16}$;
flutter threshold to $\pm2\%$; $0/2\cdot10^5$ discrepancies of the discrete
criterion against exact roots).
\end{remark}

\begin{figure}[t]\centering
\includegraphics[width=\textwidth]{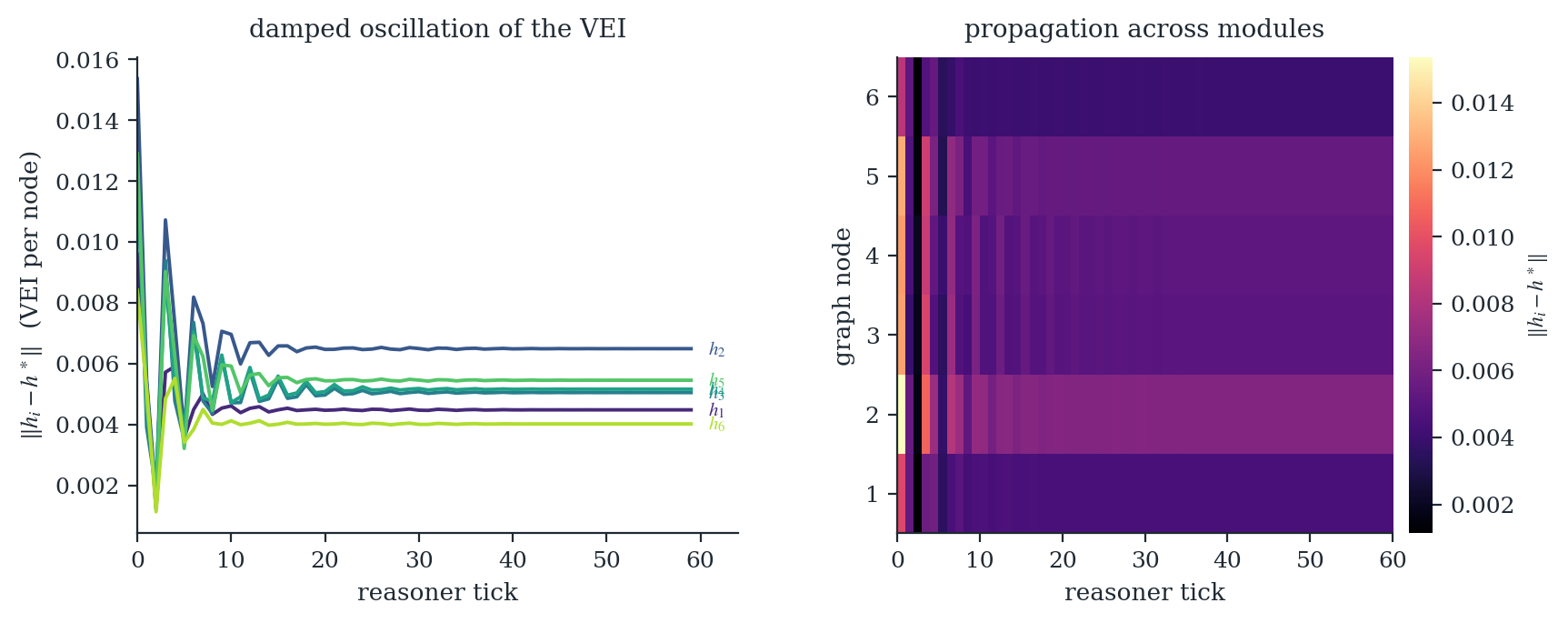}
\caption{Intrinsic dynamics of the field (simulation of the solver of
Sec.~\ref{sec:hbp}). An interoceptive impulse at node~1 induces a \emph{damped
oscillation} of the VEI (left, $\|h_i-h^\ast\|$ per node) that \emph{propagates}
to neighboring modules through the Laplacian coupling (right,
node${\times}$tick map). It is the second-order regime that the analogy with
coupled spins in NMR describes (a local impulse spreads and precesses); the
Verlet scheme integrates it stably under the certificate of
Prop.~\ref{prop:verlet}.}
\label{fig:dynamics}
\end{figure}

\paragraph{NMR analogy.} The family is formally that of a system of coupled
spins: $\zeta\sim1/T_2$ (transverse relaxation), $\omega_0$ (precession),
$c^2\mathbf{L}$ (dipolar coupling), $D\sim1/T_1$ (longitudinal relaxation / spin
diffusion), $b\,\mathbf{A}$ (directed transport) and $\beta\,\mathbf{A}^3$
(dispersion: mode-dependent group velocity; Fig.~\ref{fig:dynamics} shows the
real propagation-oscillation). The normal modes of $\mathbf{L}$ are the spectrum
of the system, and the flutter certificate of Lemma~\ref{lem:placement}(c) is
the analog of the instability threshold of a spin system pumped above its
relaxation rate.

% =====================================================================
\section{Experiments}
\label{sec:exp}

\paragraph{Task and protocol.} Non-commutative state tracking: composition of
$K$ generators of $S_5$ (NC$^1$-hard $\Rightarrow$ requires recurrence), with
dense supervision of the running permutation and two generator sets (adjacent
transpositions; 5-cycle $+$ transposition). \emph{Pre-registered} protocol
(declared before looking at any number): fresh seeds $10..19$, evaluation with a
disjoint seed (no contamination), argmax restricted to the answer sub-vocabulary,
$N_{\max}{=}24\ge K_{\max}$, primary analysis paired by seed (Fisher-$z$ of the
pure-OOD correlation, $K\ge14$) with Holm correction over the 4 contrasts. Models
of $4.2$--$5.6$M parameters (\texttt{vanilla} $4.23$M, \texttt{gating} $5.28$M,
\texttt{gating\_wm} $5.54$M, the field variants $5.60$M; per-variant counts in
the repository). OOD regime: training with $K\le12$, evaluation up
to $K\le24$.

\subsection{A numerical-precision bug and its A/B: the effect is structural}
\label{sec:bf16}
During the audit we discovered that, in BF16, the field's raw physical parameters
($|\theta|\sim0.3$--$1.6$) have $\mathrm{ULP}/2$ ($1.0{\cdot}10^{-3}$ to
$3.9{\cdot}10^{-3}$) \emph{larger} than the typical Adam step
($\mathrm{lr}\approx3{\cdot}10^{-4}$): the update is rounded to zero at every
step and $\omega_0,\zeta,c$ remain \textbf{frozen at their initialization}
throughout training (verified: $\zeta\equiv0.5$ exact after 2500 steps; with the
fix ---anchoring those parameters to FP32--- they move). With the bug fixed, we
re-ran the full protocol (90 cells). The seed-paired A/B between frozen and
learnable physics gives
$\Delta\,\mathrm{corr}_{\mathrm{OOD}}=+0.010\pm0.046$ (adjacent) and
$+0.019\pm0.054$ (5-cycle), both non-significant at $n{=}10$ and with intervals
wide enough to contain effects the size of the order effect itself:
\textbf{we find no evidence that the HBP effect depends on the fine tuning of
its physical constants}. This A/B re-runs the same cells and seeds under a
different numerical precision, so it is a paired same-batch contrast rather
than an independent replication ---and Sec.~\ref{sec:v4} then bounds the
structural reading to one generator family.
We report both versions for transparency; all numbers that follow are
from the fixed version (v3, learnable physics).

\subsection{In distribution: recurrence---not the field, and not the working memory---explains accuracy}
\begin{table}[h]\centering
\caption{Accuracy by difficulty (in-distribution, mean over $n{=}3$ seeds;
per-seed values and paired contrasts in the repository).}
\begin{tabular}{lcccccc}
\toprule
& \multicolumn{3}{c}{$S_5$ adjacent} & \multicolumn{3}{c}{$S_5$ 5-cycle$+$transp.}\\
Model & short & medium & long & short & medium & long \\
\midrule
vanilla     & $0.984$ & $0.670$ & $0.147$ & $0.974$ & $0.976$ & $0.581$ \\
gating      & $0.985$ & $0.761$ & $0.218$ & $0.974$ & $0.980$ & $0.725$ \\
gating\_wm  & $0.983$ & $0.747$ & $0.197$ & $0.974$ & $0.982$ & $0.714$ \\
hbp\_first  & $0.987$ & $0.776$ & $0.245$ & $0.974$ & $0.985$ & $0.730$ \\
hbp\_full   & $0.986$ & $0.771$ & $0.248$ & $0.974$ & $0.984$ & $0.741$ \\
\bottomrule
\end{tabular}
\end{table}
The gain decomposes cleanly, and the working memory is not where it comes
from: recurrence buys $+0.071$ (adjacent) and $+0.144$ (5-cycle) in the long
stratum (\texttt{vanilla}$\to$\texttt{gating}), adding the working memory buys
nothing and is nominally negative (\texttt{gating}$\to$\texttt{gating\_wm}:
$-0.021$ and $-0.011$), and the HBP adds
$+0.03$--$0.05$ in the long stratum, which does not reach significance at
$n{=}3$ ($p\ge0.11$). The 2nd- vs 1st-order contrast is null in distribution.

\subsection{Out of distribution (v3, before deconfounding): a second-order adaptivity signal}
We measure compute adaptivity
$\mathrm{corr}(K,\mathbb{E}[n_{\mathrm{iter}}])$ restricted to the pure-OOD range
($K\ge14$), paired by seed (Table~\ref{tab:ood}, Fig.~\ref{fig:ood}).

\begin{table}[h]\centering
\caption{Pure-OOD compute adaptivity, protocol v3 (learnable physics), $n{=}10$
fresh seeds ($\pm$std). Fisher-$z$ paired contrasts against the
\texttt{gating\_wm} control. Boldface marks \emph{uncorrected} $p<0.05$: neither
contrast survives Holm-4 individually, and the order contrast these numbers
support is bounded by the deconfounding of Sec.~\ref{sec:v4}.}
\label{tab:ood}
\begin{tabular}{lccc}
\toprule
& gating\_wm (WM) & hbp\_first (diffusive) & hbp\_full (wave) \\
\midrule
$S_5$ adjacent        & $0.002\pm0.087$ & $0.036\pm0.090$ & $0.143\pm0.134$ \\
\quad vs WM           & --  & $t{=}0.77$, $p{=}0.46$ & $\mathbf{t{=}2.68}$, $p{=}0.025$ \\
$S_5$ 5-cycle$+$transp. & $0.213\pm0.068$ & $0.185\pm0.077$ & $0.268\pm0.085$ \\
\quad vs WM           & --  & $t{=}{-}1.42$, $p{=}0.19$ & $\mathbf{t{=}2.62}$, $p{=}0.028$ \\
\bottomrule
\end{tabular}
\end{table}

\begin{figure}[h]\centering
\includegraphics[width=0.62\textwidth]{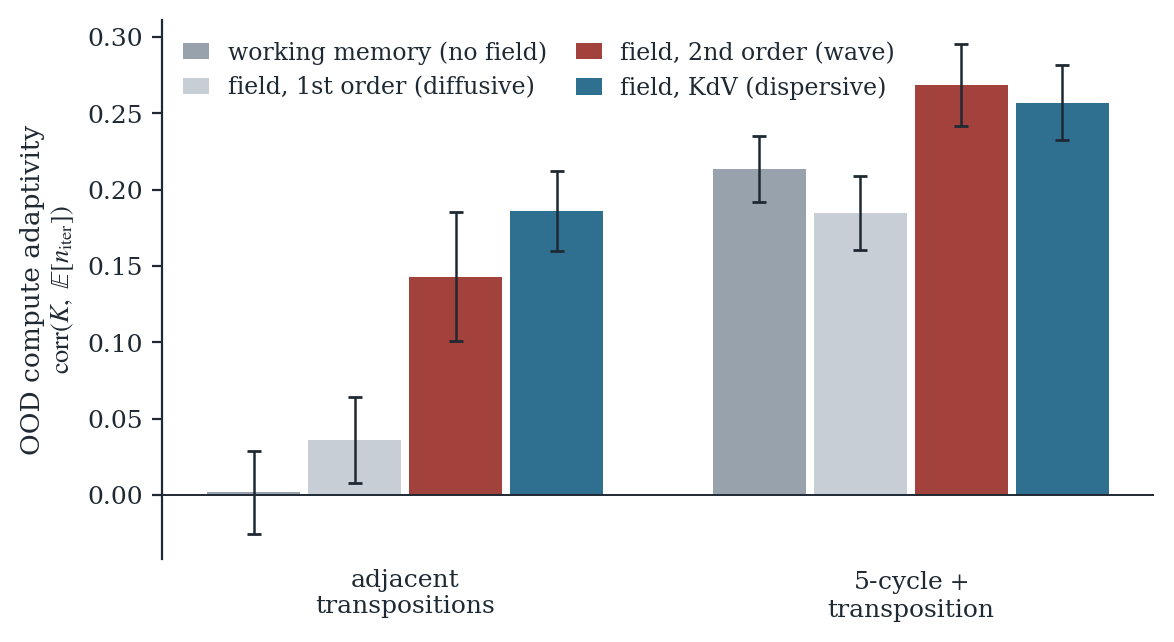}
\caption{OOD compute adaptivity by variant and generator set ($n{=}10$,
$\pm$SEM), protocol v3 (learnable physics). The \texttt{hbp\_first} arm runs at
its original caps: the twenty-seed replication of Sec.~\ref{sec:v4} shows that
the 5-cycle gap displayed here is confounded with capacity and vanishes once
the caps are equalized.}
\label{fig:ood}
\end{figure}

Three observations. (1)~\emph{A second-order signal, later bounded}: the
diffusive limit ties with the control in both generator sets, and the direct
wave$-$diffusion contrast gives $t{=}1.90$ ($p{=}0.09$) and $t{=}3.49$
($p{=}0.007$). Both are exploratory (outside the pre-registered Holm-4 family)
and both run with the \texttt{hbp\_first} caps \emph{not} equalized;
Sec.~\ref{sec:v4} shows that once they are, the 5-cycle contrast disappears
($+0.014$ $[-0.013,+0.040]$, n.s.)---part of what reads here as order was
capacity.
(2)~\emph{Statistical honesty}: none of the primary contrasts individually
survives Holm-4 ($p{=}0.025/0.028$ against
$\alpha_{\mathrm{Holm}}{=}0.0125/0.0167$). We report no combined $p$: the two
generator sets share the same ten seeds, so a Fisher combination would violate
independence. We present the effect as directionally consistent evidence from a
single confirmatory batch---not as definitive, and not as an independent
replication, since v2 and v3 re-run the same cells and seeds.
(3)~The compute slope in pure OOD
($\mathbb{E}[n_{\mathrm{iter}}|K{\in}18..24]-\mathbb{E}[n_{\mathrm{iter}}|K{\in}12..15]$)
is maximal for the second order in both sets ($0.45$ and $2.45$), though in
5-cycle the \texttt{gating\_wm} control is a near-tie ($2.29$) and
\texttt{hbp\_first} ($1.49$) falls \emph{below} the control, so the ordering is
monotone only in \texttt{adjacent}. OOD accuracy
does not separate across variants ($\approx0.06$--$0.08$; the task does not
extrapolate for anyone): the effect is on the \emph{allocation} of computation.

\begin{figure}[t]\centering
\includegraphics[width=\textwidth]{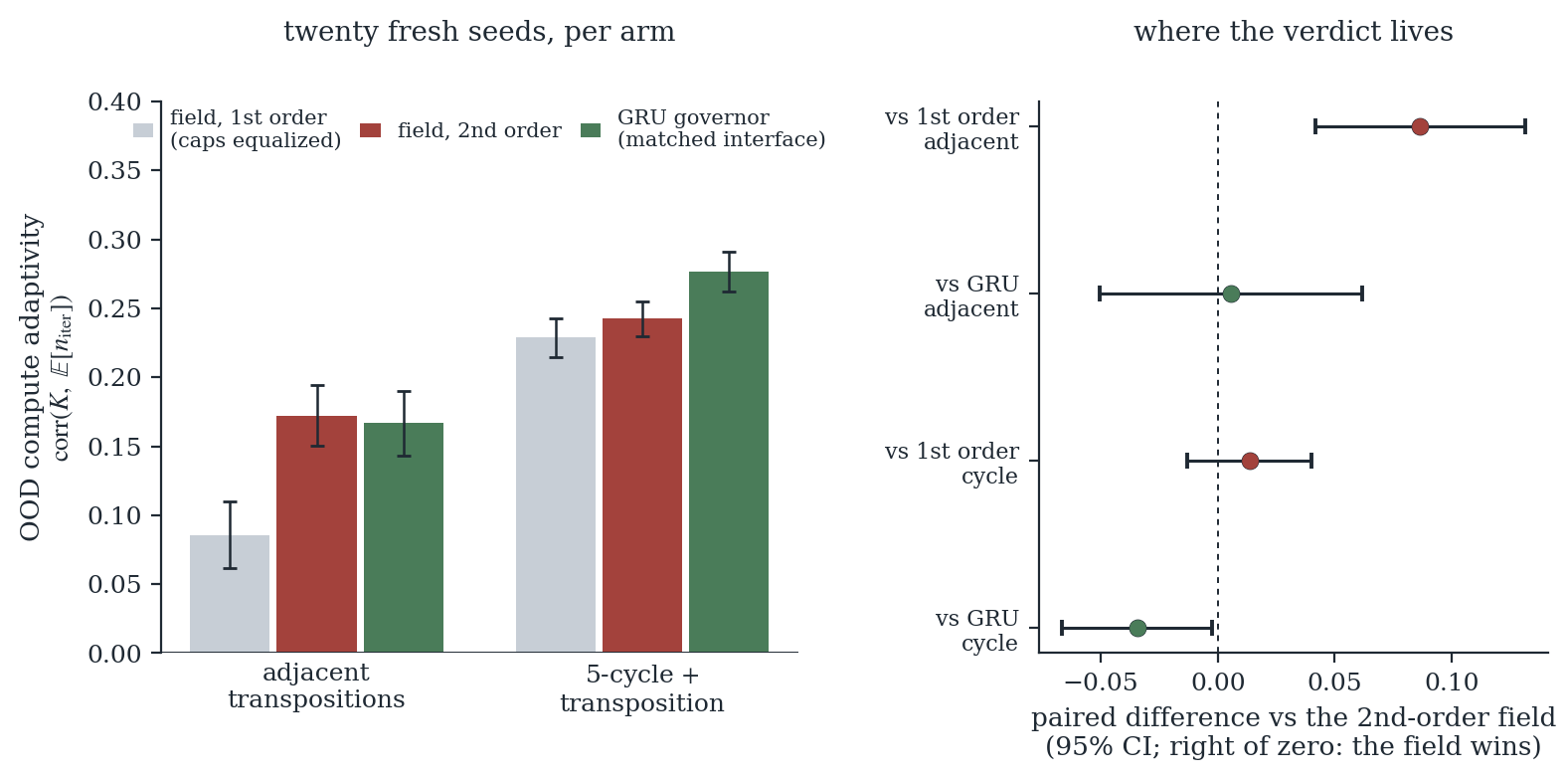}
\caption{The deconfounding campaign of this section (twenty fresh seeds,
$20$--$39$). \emph{Left:} out-of-distribution compute adaptivity per arm and
generator family; the first-order arm now runs at caps equalized to the
second-order ones, which the unconditionally contractive implicit kernel of
Proposition~\ref{prop:imex} makes safe. \emph{Right:} the same evidence as
seed-paired differences from the second-order field, which is where the
verdict actually lives. The order effect clears zero only in
\texttt{adjacent}; the matched-interface GRU governor ties the field there and
its interval falls entirely below zero in \texttt{cycle\_transp}. Error bars
are $\pm$SEM (left) and $95\%$ CI (right).}
\label{fig:v4}
\end{figure}

\subsection{Deconfounding order, and a matched-interface learned governor}
\label{sec:v4}
At review's request we ran a preregistered deconfounding campaign
(twenty fresh seeds 20--39, never used before; OOD protocol unchanged;
paired by seed, one-sided, Holm-2 per hypothesis). The first-order arm
was rebuilt with caps \emph{equalized} to the second-order ones
($c_{\max}{=}0.7$, $\omega_0$ up to $1.8$). The equalization is of the
coefficient \emph{caps} only: the second-order branch still carries a velocity
state, hence twice the effective controller state, and we did not run the two
further controls that would separate order from state size and from
underdamping (a first-order arm with $2d_h$ state; an overdamped second-order
arm at the same caps); the \texttt{adjacent} effect below must be read with
that residual confound. The equalized caps are what the unconditionally
contractive implicit kernel of Proposition~\ref{prop:imex} makes safe:
the original \texttt{hbp\_first} caps were an artifact of the explicit
$\zeta$-coupled integrator, so the earlier order contrast was confounded
with capacity. The second new arm replaces the physical integrator with
a per-node GRU cell of identical interface---same interoception, same
external forcing, same modulation heads.

The results execute the preregistration's honest branches. For reference, mean
pure-OOD adaptivity per arm ($n{=}20$, $\pm1$ SE) is \texttt{hbp\_full}
$0.172\pm0.022$ / \texttt{hbp\_first\_eq} $0.086\pm0.024$ / \texttt{hbp\_gru}
$0.167\pm0.023$ in \texttt{adjacent}, and $0.242\pm0.013$ / $0.229\pm0.014$ /
$0.277\pm0.015$ in \texttt{cycle\_transp}---the GRU arm in
\texttt{cycle\_transp} is the best single arm of the study. The order
effect is strong in \texttt{adjacent}
($\Delta_{\mathrm{corr}}=+0.087$ $[+0.042,+0.132]$, $t{=}4.0$,
$p{=}3.5\times10^{-4}$) and not detected in \texttt{cycle\_transp} once capacity
is equalized ($+0.014$ $[-0.013,+0.040]$, n.s.): under Holm the order hypothesis is not
confirmed as a family-general claim, and we report it as
generator-bounded---part of the original contrast was capacity. The GRU
governor is statistically indistinguishable from the field in \texttt{adjacent}
($+0.006$ $[-0.051,+0.062]$, n.s.---an interval too wide to certify
equivalence) and
nominally exceeds it in \texttt{cycle\_transp} ($-0.035$
$[-0.067,-0.002]$), with mutual accuracy non-inferiority everywhere
(margin $0.02$). We therefore downgrade \emph{structure yes} to:
\emph{a stateful recurrent governor with this interoceptive interface
suffices}. The field's distinctive contribution is that it is \emph{an}
implementation whose stability is certifiable
(Appendices~\ref{app:scope}--\ref{app:proofs}); the GRU carries no such
certificate.

\subsection{The zoo of physics: KdV dispersion neither helps nor hurts}
On the same protocol ($n{=}10$), \texttt{hbp\_kdv} (wave $+$ dispersion
$\beta\mathbf{A}^3$ $+$ saturated nonlinearity, gyroscopic placement of
Lemma~\ref{lem:placement}) matches the pure wave in OOD adaptivity:
$\mathrm{corr}_{\mathrm{OOD}}=0.186$ vs.\ $0.143$ (adjacent) and $0.257$ vs.\
$0.268$ (5-cycle); the paired \emph{kdv$-$wave} contrast is null in both
($t{=}1.12$, $p{=}0.29$; $t{=}{-}0.59$, $p{=}0.57$). Since \texttt{hbp\_kdv} is
second order, it separates from the
\texttt{gating\_wm} control in \texttt{adjacent} ($t{=}7.05$, $p{<}0.001$) but
not in 5-cycle ($t{=}2.08$, $p{=}0.07$, n.s.); both contrasts are exploratory,
outside the pre-registered Holm-4 family. That is: what separates
\texttt{hbp\_kdv} from \texttt{hbp\_full} ---the third-order dispersive
operator--- shows \emph{no detectable effect} at $n{=}10$ in either direction,
while what they share ---the second-order inertia--- is what covaries with
adaptivity, and only in \texttt{adjacent} once Sec.~\ref{sec:v4} bounds it. The zoo runs with the
corrected dynamics (gyroscopic placement, IMEX with antisymmetric kernel,
complex Schur--Cohn certificate active; the exact spectral radius was checked
$<1$ at initialization and at the $\beta$ cap, not at every trained
operating points). At $N{=}6$ and $\sim$5--11 ticks there is no solitonic regime
to exploit; the result bounds the role of dispersion in small module graphs and
reinforces the reading we can defend: within this zoo, what covaries with
adaptivity is the second-order inertial term rather than the richness of the
spatial operator---a reading that Sec.~\ref{sec:v4} then confines to
\texttt{adjacent} and shows is not exclusive to a physical integrator.

% =====================================================================
\section{A mechanism null: the field's physics does not move accuracy}
\label{sec:null}

The family lets us ask directly: does the physical \emph{regime} of the field
matter for performance? Three independent lines of evidence, subjected to
adversarial verification (refutation panels with pre-specified confounder
controls), answer no:

\paragraph{(1) Forced physics.} Training with $\alpha$ \emph{imposed} (pure wave
vs.\ pure diffusion) over four tasks (composition in $S_5$, functional
iteration, modular sum with distractors, recall), accuracy is identical in all
($|\Delta|\le0.012$, within binomial error), with both regimes certified stable.

\paragraph{(2) Gated physics.} Letting interoception choose $\alpha,D,b$ per tick
(\texttt{hbp\_mix}), the gating converges to a \emph{constant} physics
($\alpha\to0.79$, deviation across inputs $\sim10^{-5}$, also per dimension:
$\alpha\in[0.76,0.84]$ across the $N{\times}d_h$ components), with the head
weights \emph{growing} under weight decay (the gradient arrives; the loss
\emph{prefers} constant physics) and with a strongly input-sensitive
initialization retrained back to the same attractor. Ablating $D$ or $b$ changes
accuracy by $<0.008$.

\paragraph{(3) Dual-regime tasks.} We design tasks with two signaled modes whose
temporal modulation demand differs (oscillatory retention vs.\ monotone
settling), mounted on composition in $S_5$ to guarantee iteration (the reasoner
reaches $\mathbb{E}[n_{\mathrm{iter}}]$ up to $11$, increasing with $K$). Even so,
wave and diffusion perform identically \emph{within each mode}
($|\Delta|\le0.005$), which bounds to $\le0.005$ the gap a physics-switching
oracle could have achieved \emph{on these tasks}; we do not claim the bound
transfers to demands we did not construct.

\paragraph{(4) Beyond \emph{local} physics: non-locality and a flow substrate.}
The three lines above vary the \emph{type} of local physics. We further test two
\emph{structural} extensions that the set-point argument does not trivially
dissolve. \emph{(a) Non-locality.} We couple the modulation to a stream function
enslaved by Poisson on the graph, $\psi=\mathbf{L}^{+}(h-h^\ast)$, so that the
modulation of each node depends on \emph{the whole} field (the discrete analog of
the non-local dipolar field in NMR, $\nabla^2\psi=-\omega$). On the same OOD
protocol, non-locality does not improve accuracy ($\Delta\approx0$) and
\emph{degrades} compute adaptivity in 5-cycle$+$transp.
($\Delta z={-}0.069$, $t={-}3.88$) while showing no effect in \texttt{adjacent}
($\Delta z={-}0.023$, $t={-}0.66$, n.s.). A plausible reading---which this
exploratory contrast does not establish---is that the difficulty signal
governing \emph{halting} is local to the reasoner node, so globalizing it
blurs it. \emph{(b) 2D flow substrate.} We
carry the field to a sheet with incompressible Navier--Stokes dynamics
(vorticity $\omega$, $\psi=\mathbf{L}^{+}\omega$, velocity
$u=\partial_y\psi,\,v=-\partial_x\psi$, advected passive scalar $S$), where the
field \emph{computes} transport instead of modulating. A \emph{fundamental}
physical limit closes it: by incompressibility ($\nabla\!\cdot u=0$,
conservation of material area) a flow can transport and \emph{mix} but never
\emph{concentrate} information at a point ---concentrating requires negative
divergence, i.e.\ compressibility---; in a delivery task with local readout, the
mass delivered to the destination is $\approx0$ and the temporal dynamics makes
it worse (mixes more). The incompressible flow substrate is a mixer, not a
delivering computer. Both extensions confirm the null by routes that are not
``another type of local physics''.

\paragraph{Interpretation: temporal \emph{coarse-graining}.} The reasoner does
not process one token per tick: it groups ($\sim$5 ticks for $\sim$19
operations), and gradient descent systematically finds solutions in which the
modulation demand is a quasi-static \emph{set-point} per instance. Both branches
of the family reach any forced set-point at equilibrium; they differ only in the
transient ($\sim$4 ticks), which is invisible to the loss. Physics switching
would only pay off with oscillatory demands aligned tick by tick, which the
architecture does not force and the gradient does not discover. Together with
(4), the pattern is broad: neither the types of local physics we instantiated, nor
non-locality, nor the substrate-computer role change the result. This null is, we
believe, informative beyond our model: it suggests that dynamic modulatory
substrates in adaptive architectures are exercised by their \emph{recurrent
structure} (statefulness, and---in one of our two generator families---the
order of the dynamics) rather than by their \emph{type} of physics, and
that an incompressible field, despite its elegance, has a structural ceiling as a
spatial computer.

\paragraph{The field as an evidence accumulator: the niche is occupied.} One
employment survives the set-point argument on paper: a damped second-order
field is, formally, a leaky integrator with inertia---the physical form of
sequential evidence accumulation. If the reasoner's per-tick posterior stream
(halting mass, readout margin and entropy, prediction flips, trajectory
speed) were noisy in a way that a temporal filter could exploit, the field
would finally carry load. We tested the premise with a kill-gate whose primary
cell and pass threshold ($\Delta\mathrm{AUC}\ge0.03$) were fixed in a dated
design document before the verdict---not a formal pre-registration, and the
probe was recalibrated after a first null (below)---run
on the twelve frozen solvers of the integration program of the
companion paper \compcite{} (six seeds $\times$ two independent runs, a reasoner-with-value
substrate distinct from the $n{=}10$ protocol above; we report the gate
here because it closes this paper's thesis, and the transfer of its
conclusion rests on the shared recurrent-reasoner structure), using a
probe whose
sensitivity we certified with planted distributed signals (load $0$ gives
$-0.001\pm0.001$, unbiased; load $0.20$ gives $+0.026$ $[+0.020,+0.031]$, i.e.\
sensitivity at the scale of---but not above---the $0.03$ threshold; an
uncalibrated version of the
same probe paid an overfitting tax of ${\sim}0.018$ AUC that biased it
toward the null, and the positive control caught it before any verdict). The
result is flat: the full stream predicts success no better than the last
tick alone, $\Delta\mathrm{AUC}=+0.0007$ $[-0.0065,\,+0.0079]$ in the
predeclared primary cell---against a pass threshold of $0.03$, which it does
not meet---with every secondary cell within
$[-0.0012,+0.0073]$; and the detrended stream carries almost no temporal
structure (lag-$2$--$6$ autocorrelation $\approx+0.05$; non-DC spectral
peaks in $0\%$ of trajectory-speed traces and $17\%$ of margin traces,
none exploitable by the certified probe). The mechanism is
structural rather than statistical: the recurrent state already integrates
its own history, so the last tick is a sufficient statistic of the
trajectory and the accumulator's niche is occupied by construction. This
closes the last route by which the field could have carried content, and
sharpens the paper's thesis into an audit: of the employments we were able to
imagine and test---source of computation, load-bearing physics, evidence
accumulator---none is performed better by the field than by a dedicated
component of the architecture, except the one the field already performs. It
modulates compute; that is the job, and it is a real one---but not an
exclusive one: a matched learned recurrence modulates compute at least as well
(Sec.~\ref{sec:v4}), so what stays the field's own is the certificate, not the
job.

% =====================================================================
\section{Discussion and limitations}

\subsection{What the homeostatic field does and does not do}
The results tease apart three questions. \emph{Does it improve accuracy?} No: the
recurrence explains in-distribution accuracy (the working memory adds nothing
on top of it in our own table), and none of the members of the family we
instantiated ---forced, gated or from the zoo--- moves it. \emph{Does the type of
physics matter?} No for performance (Sec.~\ref{sec:null}); the diffusion equation
plays here the role of the contrast that gives meaning to the central claim, and
the KdV operators bound the role of dispersion at small $N$. \emph{Does structure
matter?} Partly: the second-order dynamics produces a compute allocation more
robust out of distribution than the control's learned policy, and the effect is
independent of the tuning of its constants (BF16-bug A/B) ---but under
deconfounding at equalized capacity it holds in one generator family and not the
other, and a matched-interface GRU reaches the same place---nominally going
past it in the family where the order effect vanishes (Sec.~\ref{sec:v4}).
The field is a \emph{compute controller}, not an accuracy enhancer; inertia is an
active ingredient in one family, and the class of viable governors is wider than
the field.

\subsection{Where it fits: the field as governor, not as cortex}
The pattern of results is not an ambiguous failure but a \emph{location signal}.
That a dynamic internal field is neutral for accuracy yet load-bearing only
for compute allocation---in one generator family, and not exclusively
(Sec.~\ref{sec:v4})---places it, on the evidence we have, in the \emph{metacognitive control /
resource governor} layer of a cognitive architecture ---and not in the reasoning
one. In biological terms, it is not the cortex (which computes the answer) but
the \emph{neuromodulatory and homeostatic} substrate (which sets gain, arousal
and budget); in systems terms, it is the \emph{scheduler/governor}, not the
arithmetic unit; in control terms, the \emph{external} supervisory loop, slow and
certified, wrapped around the fast inference loop. Its universal
interoception/modulation interface (Fig.~\ref{fig:concept}) is precisely that of
a governor: it observes the computational ``vital signs'' (progress, uncertainty,
effort) and modulates \emph{how much} and \emph{where} to compute, without
deciding \emph{what}. The certified stability (Sec.~\ref{sec:theory}) ceases to
be a technical detail and becomes the \emph{design} property a governor needs: a
control loop that provably does not diverge. It is also, after the GRU
comparison, the property that survives as distinctive: the field is not the
strongest governor of its class, it is the one that comes with a proof. The
substance null and the bounded structure effect, together, do not disqualify the
field ---they relocate it: it is a viable and certifiable \emph{compute
governor}, not a cognition module.

\subsection{What this component is not, and what is missing}
Our thesis is bounded: the HBP is \emph{one} piece ---the regulatory one--- of a
broader intelligent system, not the system. It does not reason, it does not plan
and it does not replace the recurrent reasoner, whose recurrence is what
explains in-distribution accuracy. A complete architecture pursuing cognition would
require, in addition to the governor characterized here, a deliberative module
(the ``cortex''), a world model/memory, and a system of drives and goals; all of
them lie outside this work and are the subject of future development. This
article's contribution is to close, with rigor and honesty, the question about
\emph{that} piece: a dynamic internal field can govern an LLM's computation in a
certifiable way, and what is distinctively its own is the certificate---not
capability, and not even the second-order structure, which is bounded to one
generator family and matched by a learned recurrence.

\subsection{Limitations}
\emph{(1)}~A single task family ($S_5$), though theoretically sharp.
\emph{(2)}~The finding's metric is compute allocation, not OOD accuracy (which
does not separate), and the structural effect that remains is bounded to one
of the two generator sets and is reproduced by a matched-interface GRU
governor (Sec.~\ref{sec:v4}). \emph{(3)}~Small scale ($4.2$--$5.6$M
parameters; $n{=}3$ in-dist and $n{=}10$ OOD in v3, raised to $n{=}20$ in the
deconfounding campaign) and v3 primary contrasts that do not survive Holm-4
individually; the deconfounding equalizes coefficient caps but not controller
state size. \emph{(4)}~$N{=}6$ nodes: no solitonic regime for the dispersion;
the zoo of physics at large $N$ remains open. \emph{(5)}~The mechanism null is a
result about what the gradient \emph{finds}, not about what the architecture
\emph{could} express: a task with an architecturally forced tick-aligned
oscillatory demand could reverse it. \emph{(6)}~The flow-substrate limit
(Sec.~\ref{sec:null}, line 4) is specific to \emph{incompressibility}: a field
with learnable sinks (compressible advection--diffusion--reaction) could deliver,
at the cost of abandoning the vorticity--stream-function formulation and its
elegance; we did not evaluate it.

\subsection{Future work}
(i)~Larger module graphs (nodes per layer/head) where dispersion and advection
have sufficient spectrum; (ii)~Cayley discretization of the gyroscopic term
(exact discrete neutrality, Prop.~\ref{prop:verlet}); (iii)~tasks where the
compute budget is the effective bottleneck, to turn allocation robustness into
accuracy; (iv)~interoceptive signals discriminative of the problem (the current
ones are nearly input-invariant at fixed $K$, which limits any gating);
(v)~integration with NMR applications, where the formal analogy is transferable.

% =====================================================================
\section{Conclusion}
We asked whether a dynamic internal field can modulate an LLM's cognition. The
answer, supported by proved results with their scope stated
(App.~\ref{app:scope}), numerical audits of the implementation, and two
pre-declared empirical campaigns (the v3 protocol and the v4 deconfounding),
is
and threefold: \textbf{substance no, structure only in part, certifiability
yes}. A second-order homeostatic field is a viable and certifiable \emph{compute
governor} ---it confers on compute allocation an out-of-distribution robustness
that the \texttt{gating\_wm} control's learned halting policy does not have, in
one of two generator families, while a matched-interface GRU governor reaches
the same place---and nominally goes past it in the other family---without a
certificate---, but its \emph{type} of physics is irrelevant for accuracy in
every configuration we tested: we propose that the gradient laminates temporal
demand to set-points, and the
null is \emph{broad}, resisting the sweep of local physics, Poisson-type
non-locality and a 2D Navier--Stokes flow substrate (closed by an
incompressibility limit). The field modulates, it does not think. We thus place
the HBP in the \emph{metacognitive control} layer of a cognitive architecture
---its computational ``brainstem''--- and provide the stability tools
(gyroscopic-placement lemma, complex Schur--Cohn certificates, unconditional
IMEX scheme) that a governor of this kind needs ---and that its learned
competitors do not have. We believe the substance/structure/certifiability
demarcation, the broad null with a mechanism, and the architectural location
that follows from it are useful to anyone designing the layer that governs
---rather than performs--- a model's computation.
Characterizing \emph{one} piece well, and its limits, is the necessary step
before assembling the rest.

{\sloppy \paragraph{Reproducibility.} All the code is pure PyTorch, with no third-party
\emph{framework} dependencies, with fixed seeds and protocols frozen before
observing the data. The field's implementation and its certificates are in
\texttt{model/hbp.py} (PDE family, gyroscopic placement, IMEX solver,
\texttt{stability\_penalty} and the exact spectral radius
\texttt{certificate\_spectral\_radius}); the 2D flow substrate in
\texttt{model/flow2d.py}. Reproducing the tables and figures: the pre-registered
benchmark and its paired analysis (\texttt{experiments/benchmark\_v3.py},
\texttt{experiments/benchmark\_report\_v2.py}), the v4 deconfounding campaign
(\texttt{PREREG\_V4.md}, \texttt{experiments/benchmark\_v4.py},
\texttt{results\_benchmark\_v4/report\_v4.json}), the accumulator kill-gate
(\texttt{mhbp/tasks/reasoner\_g0/n4\_g1.py}), the KdV zoo (\texttt{experiments/benchmark\_zoo.py}), the
non-locality A/B (\texttt{experiments/benchmark\_elliptic.py}), and the mechanism-null studies
(\texttt{experiments/\_alpha\_scan.py}, \texttt{experiments/\_pde\_study.py}, \texttt{experiments/\_twin\_pilot.py},
\texttt{experiments/\_gates\_flowroute.py}). A regression test verifies that the field is
actually \emph{wired in} ---ablating it changes the forward pass and the
compute policy, though not accuracy--- (\texttt{experiments/\_audit\_hbp.py}), a check of
plumbing rather than of the functional load this paper's null denies; and physical verifications of the
2D solver in \texttt{experiments/\_check\_flow2d.py}. The physical parameters are anchored to
FP32 (\texttt{pin\_fp32}); we document the BF16 precision bug and its A/B as part
of the experimental record.

\appendix
\section{Scope and status of the stability results}\label{app:scope}
At a reviewer's request we state precisely what is proven, under which
hypotheses, and what is verified numerically.

\paragraph{Proven.} The discrete Verlet criterion of
Prop.~\ref{prop:verlet} is proved in App.~\ref{app:proofs}: it is necessary
and sufficient per latent root, and the reduction uses no commutation
hypothesis. The passage from latent roots to an operator-level certificate via
the parameter box is sufficient only, and Remark~\ref{rem:caja} both proves the
correct finite check and reports how conservative it is.
The placement lemma's structural claims---that
antisymmetric coupling enters the second-order branch gyroscopically and
the first-order branch positionally, and that the two placements have
qualitatively different stability consequences---follow from the
symmetric/antisymmetric decomposition given in the stability section,
with the standard energy arguments sketched there ---standard because they
are classical: (a) is Kelvin--Tait--Chetaev and (b) is the accretive/numerical
range argument. Part (c) of the dichotomy is \emph{not} proved here: showing
that one Lyapunov candidate stops decreasing does not establish instability,
and the corresponding instability theorems are in
\citet{bulatovic1999stability}. The unconditional contraction of the IMEX
kernel does \emph{not} follow from Schur--Cohn applied mode by mode ---that
route needs simultaneous diagonalization, which fails here--- but from the
coercivity of the symmetric part, which is precisely what buys the absence of
a commutation hypothesis (App.~\ref{app:proofs}).

\paragraph{Exact under hypothesis, and the hypothesis is never met.} The
closed-form flutter threshold $\beta\rho(A^3)<2\zeta\omega_0^2$ is exact when
\emph{both} the stiffness and the damping operator are multiples of the
identity: $K=\omega_0^2 I$ (i.e.\ $c=0$) \emph{and} $C=2\zeta\omega_0 I$
(i.e.\ no structural diffusion). The second is a hypothesis the earlier
version of this appendix left silent. Two corrections follow. First, the
parenthetical equivalence ``$c=0$, or $[\mathbf{L},A^3]=0$'' was wrong: under
commutation with $c>0$ the system does decouple, but the per-mode threshold
becomes $\beta|\nu_j|<2\zeta\omega_0\sqrt{\omega_0^2+c^2\ell_j}$, so the
published threshold is then sufficient but \emph{not} necessary, hence not
exact. Second, and more consequentially: in the implementation $c$ is
parameterized as $c_{\max}\sigma(\cdot)$ and is therefore \emph{strictly
positive in every run}, so the exact regime covers none of the systems we
report. Worse for the reader who checks, the headline arm instantiates no
antisymmetric operator at all ($\mathbf{G}=0$ by configuration), so the
placement-and-flutter apparatus is a \emph{design-exclusion result about a
variant we chose not to build}, not a certificate of anything we ran. We use
the threshold as the design guide it is, and claim nothing more.

\paragraph{Verified numerically.} The identities and thresholds of the
stability section were validated against direct numerical experiments on
the discrete system (residuals at machine precision for the identities;
threshold location within the stated tolerance; no instability observed
under the certified conditions in long-horizon integration). These are
audits of the \emph{implementation} at fixed parameter draws, not
theorems.

\paragraph{On trained runs.} The operative guarantee during and after
training is the runtime certificate: the exact spectral radius of the
one-step operator, computed by probing the implementation
(\texttt{certificate\_spectral\_radius}), together with the training-time
stability penalty. Certificates are audited on trained checkpoints in
dedicated scripts; they are not logged at every training step, and we
scope the paper's verification claims accordingly.

\paragraph{Evidence-accumulator gate.} The kill-gate of
Section~6 ran on the twelve frozen solvers of the integration program of
the companion paper \compcite{}---a reasoner-with-value
substrate distinct from the
$n{=}10$ protocol of this paper; the transfer of its conclusion rests on
the shared recurrent-reasoner structure, as stated in the main text.

\section{Proofs of the certified results}\label{app:proofs}
We give complete proofs of the three results whose scope
Appendix~\ref{app:scope} delimits: the placement dichotomy, the exact
flutter threshold under the stated hypothesis, and the unconditional
IMEX kernel bound. Throughout, $K$ is symmetric with
$K\succeq\omega_0^2 I$ (in the field, $K=\omega_0^2 I + c^2\mathbf{L}$
with $\mathbf{L}$ the graph Laplacian, positive semidefinite), and $G$
is real antisymmetric ($G^\top=-G$).

\begin{lemma}[Placement dichotomy]\label{lem:placement-proof}
Let $F(t)$ be a bounded forcing. (i) In the first-order branch
$\dot u = -Ku - Gu + F$, the antisymmetric term is norm-neutral:
$\tfrac{d}{dt}\tfrac12\|u\|^2 = -u^\top K u + u^\top F$, so stability is
governed by $K$ alone. (ii) In the second-order branch with
\emph{gyroscopic} placement, $\ddot x + 2\zeta\omega_0\dot x + Kx +
G\dot x = F$, the mechanical energy
$E=\tfrac12\|\dot x\|^2+\tfrac12 x^\top Kx$ satisfies
$\dot E = -2\zeta\omega_0\|\dot x\|^2 + \dot x^\top F$: the antisymmetric
term does no work and $E$ remains a Lyapunov function. (iii) With
\emph{circulatory} placement, $\ddot x + 2\zeta\omega_0\dot x + Kx + Gx
= F$, one gets $\dot E = -2\zeta\omega_0\|\dot x\|^2 - \dot x^\top Gx +
\dot x^\top F$; the cross term $\dot x^\top Gx$ is sign-indefinite and
can pump energy, so stability is conditional (Lemma~\ref{lem:flutter}).
\end{lemma}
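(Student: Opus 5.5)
The plan is a direct energy computation in each of the three cases, using only the symmetry of $K$ and the antisymmetry of $G$. The single algebraic fact doing all the work is that $v^\top G v = 0$ for every real vector $v$, since $v^\top G v = (v^\top G v)^\top = v^\top G^\top v = -v^\top G v$. I will state this once at the outset and invoke it in (i) and (ii). A second routine fact is needed for the quadratic forms: because $K=K^\top$, one has $\tfrac{d}{dt}\tfrac12 x^\top K x = \dot x^\top K x$.

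For (i), I differentiate $\tfrac12\|u\|^2$ along solutions of $\dot u=-Ku-Gu+F$. This gives $u^\top\dot u = -u^\top K u - u^\top G u + u^\top F$, and the middle term vanishes by the antisymmetry fact. To justify that stability is governed by $K$ alone, I then use $u^\top K u\ge\omega_0^2\|u\|^2$ to obtain $\tfrac{d}{dt}\tfrac12\|u\|^2\le-\omega_0^2\|u\|^2+\|u\|\,\|F\|$. Exponential decay follows when $F=0$, and an ultimate bound $\|F\|_\infty/\omega_0^2$ follows otherwise. This is consistent with Lemma~\ref{lem:placement}(b), and no bound on $\|G\|$ enters.

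For (ii), I differentiate $E$ to get $\dot E=\dot x^\top\ddot x+\dot x^\top Kx$. I substitute $\ddot x = F-2\zeta\omega_0\dot x-Kx-G\dot x$, after which the $\dot x^\top Kx$ terms cancel and $\dot x^\top G\dot x=0$. This leaves $\dot E=-2\zeta\omega_0\|\dot x\|^2+\dot x^\top F$. Since $K\succ0$, $E$ is positive definite and radially unbounded, so it is a Lyapunov function. In the unforced case, global asymptotic stability follows by LaSalle as in Lemma~\ref{lem:placement}(a): on $\{\dot x=0\}$ the dynamics force $Kx=0$, hence $x=0$.

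For (iii), the same substitution gives an extra term $-\dot x^\top Gx$. Unlike in (ii), this is a bilinear form in two different vectors, so the antisymmetry fact does not kill it. The main obstacle is making ``sign-indefinite and can pump energy'' precise without claiming instability, which the appendix explicitly declines to prove here. I will exhibit explicit states: take any nonzero $x$ with $Gx\neq0$, which exists whenever $G\neq0$, and set $\dot x=\mp\epsilon\, Gx$. Then $-\dot x^\top Gx=\pm\epsilon\|Gx\|^2$, so the cross term takes both signs. For small $\epsilon$ it also dominates the dissipation $-2\zeta\omega_0\epsilon^2\|Gx\|^2$, so $\dot E>0$ is possible at such states. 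This establishes only that $E$ fails to be a Lyapunov function. The conditional stability threshold itself is deferred to Lemma~\ref{lem:flutter}, in line with Lemma~\ref{lem:placement}(c) and the scope statement of App.~\ref{app:scope}.
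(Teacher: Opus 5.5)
Your proof is correct and follows the paper's route. In each case you differentiate the stated functional along trajectories and remove the diagonal terms with $v^\top G v=0$, which leaves the sign-indefinite cross term $-\dot x^\top G x$ only in (iii). Your additions are valid and make precise qualitative claims that the paper's three-line proof only asserts: the decay and ultimate bound from $K\succeq\omega_0^2 I$ in (i), the LaSalle step in (ii), and the explicit states $\dot x=\mp\epsilon Gx$ showing that $\dot E$ takes both signs in (iii).
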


\begin{proof}
All three identities follow by differentiating the stated functional
along trajectories and using $v^\top G v=0$ for every real $v$ (by
antisymmetry, $v^\top Gv = (v^\top Gv)^\top = -v^\top Gv$). In (i),
$u^\top\dot u = -u^\top Ku - u^\top Gu + u^\top F$ and the middle term
vanishes. In (ii), $\dot E = \dot x^\top\ddot x + \dot x^\top Kx$;
substituting $\ddot x$ and cancelling $\dot x^\top G\dot x = 0$ gives
the claim. In (iii) the same substitution leaves $-\dot x^\top Gx$,
which has no sign.
\end{proof}

\begin{lemma}[Exact flutter threshold for $K=\omega_0^2 I$]
\label{lem:flutter}
Consider the unforced circulatory system
$\ddot x + 2\zeta\omega_0\dot x + \omega_0^2 x + \beta A^3 x = 0$ with
$\zeta,\omega_0,\beta>0$ and $A$ real antisymmetric. The origin is
asymptotically stable if and only if
$\beta\,\rho(A^3) < 2\zeta\omega_0^2$,
where $\rho(\cdot)$ is the spectral radius.
\end{lemma}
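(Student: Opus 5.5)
The plan is to decouple the system in the unitary eigenbasis of $A^3$ and then apply a complex-coefficient Routh--Hurwitz analysis to each resulting scalar quadratic. Since $A$ is real antisymmetric, it is normal, with spectrum $\{\pm i\mu_k\}$ and $\mu_k\ge0$. Hence $A^3$ is normal and antisymmetric, with eigenvalues $(i\mu_k)^3=-i\mu_k^3$ and their conjugates, and it is unitarily diagonalizable, $A^3=U\,\mathrm{diag}(i\nu_j)\,U^*$ with $\nu_j\in\{\pm\mu_k^3\}$. Then $\rho(A^3)=\max_j|\nu_j|$. Because the stiffness and damping are multiples of the identity, the change of variables $y=U^*x$ splits the system exactly into independent complex scalar equations
\[
\ddot y_j+2\zeta\omega_0\dot y_j+(\omega_0^2+i\beta\nu_j)\,y_j=0 .
\]
This is a linear time-invariant system, so the origin is asymptotically stable if and only if every root of every characteristic polynomial $p_j(\lambda)=\lambda^2+2\zeta\omega_0\lambda+\omega_0^2+i\beta\nu_j$ satisfies $\operatorname{Re}\lambda<0$. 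The unitary change of basis preserves this property in both directions.

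The key step is the scalar criterion: for $a>0$ and $w=\omega_0^2+i\sigma$ with $\sigma\in\mathbb{R}$, both roots of $\lambda^2+a\lambda+w$ lie in the open left half-plane if and only if $a^2\omega_0^2>\sigma^2$, i.e.\ $|\sigma|<a\omega_0$ (given $\omega_0^2>0$). I would prove this directly rather than cite the complex Routh--Hurwitz table.

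The roots are $\lambda=-a/2\pm s$ with $s^2=a^2/4-w$. Both have negative real part if and only if $|\operatorname{Re}s|<a/2$. Write $s=p+iq$. Then $p^2-q^2=a^2/4-\omega_0^2$ and $2pq=-\sigma$. Setting $p^2=a^2/4$ gives $q^2=\omega_0^2$, so the boundary case is $\sigma^2=4p^2q^2=a^2\omega_0^2$. Conversely, $p^2$ is the nonnegative root of $p^4-(a^2/4-\omega_0^2)p^2-\sigma^2/4=0$. Since $p^2$ is increasing in $\sigma^2$, we get $p^2<a^2/4$ exactly when $\sigma^2<a^2\omega_0^2$.

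An equivalent, cleaner check is to substitute $\lambda=i y$ and require no imaginary-axis root. This gives $y^2=\omega_0^2$ and $ay=-\sigma$, so crossing happens only at $|\sigma|=a\omega_0$. Combined with the case $\sigma=0$, where the system is stable, and continuity of roots, this yields the equivalence. I would use this crossing argument as the main line and keep the explicit computation as a check.

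Applying the scalar criterion with $a=2\zeta\omega_0$ and $\sigma=\beta\nu_j$ gives stability of mode $j$ if and only if $\beta|\nu_j|<2\zeta\omega_0^2$. Taking the maximum over $j$ gives $\beta\rho(A^3)<2\zeta\omega_0^2$.

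Two points need care.

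First, modes with $\nu_j=0$, i.e.\ $\ker A$ when $N$ is odd, are trivially stable because $\zeta>0$. This is consistent with the criterion.

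Second, the necessity direction. If $\beta|\nu_j|\ge2\zeta\omega_0^2$ for some $j$, that mode has a root with $\operatorname{Re}\lambda\ge0$. The corresponding complex eigenvector, lifted back through $U$, gives a solution of the original system that does not decay. Since the system is real, its real part is also a non-decaying real solution. So asymptotic stability fails, and at equality only marginal stability remains.

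I expect the main obstacle to be making the scalar complex-quadratic equivalence airtight, especially its monotonicity or continuity part. The decoupling itself is routine once $K$ and $C$ are scalar. If the crossing argument feels loose, I will fall back on the explicit formula
\[
(\operatorname{Re}s)^2=\tfrac12\Bigl(\tfrac{a^2}{4}-\omega_0^2+\sqrt{\bigl(\tfrac{a^2}{4}-\omega_0^2\bigr)^2+\sigma^2}\Bigr)
\]
and compare it with $a^2/4$ directly. Squaring once reduces the comparison to $\sigma^2<a^2\omega_0^2$.
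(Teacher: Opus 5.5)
Your proposal is correct and follows essentially the same route as the paper. Both decouple in the unitary eigenbasis of $A^3$ (possible because stiffness and damping are scalar), reduce to the per-mode complex quadratic $\lambda^2+2\zeta\omega_0\lambda+\omega_0^2+i\beta\nu$, run an imaginary-axis crossing argument from the stable case $\beta\nu=0$, and obtain the threshold from the worst mode $|\nu|=\rho(A^3)$. Continuity alone does not show that a root actually enters the right half-plane beyond the threshold: the paper settles this by checking the sign of $\partial\,\mathrm{Re}\,\lambda/\partial\beta$ at the crossing, while your explicit formula for $(\operatorname{Re}s)^2$, strictly increasing in $\sigma^2$, settles it directly, so keep that computation in the proof rather than only as a check.
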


\begin{proof}
$A^3$ is antisymmetric, hence normal with purely imaginary spectrum
$\{i\nu_j\}$, $\nu_j\in\mathbb{R}$, and $\mathbb{C}^N$ decomposes into
orthogonal invariant eigenplanes. Because the remaining operators are
multiples of the identity, the system block-diagonalizes over these
planes, and on the mode with eigenvalue $i\nu$ the characteristic
polynomial is
$p(\lambda)=\lambda^2 + 2\zeta\omega_0\lambda + \omega_0^2 + i\beta\nu$.
At $\beta=0$ both roots lie in the open left half-plane (real part
$-\zeta\omega_0<0$ for $\zeta<1$; two negative real roots
$-\zeta\omega_0\pm\omega_0\sqrt{\zeta^2-1}$ for $\zeta\ge1$). Roots depend
continuously on $\beta$, so instability can only arise by a root
crossing the imaginary axis. Setting $\lambda=i\omega$
($\omega\in\mathbb{R}$) and separating real and imaginary parts:
$-\omega^2+\omega_0^2=0$ and $2\zeta\omega_0\omega + \beta\nu = 0$,
i.e.\ $\omega=\pm\omega_0$ and $\beta\nu = \mp 2\zeta\omega_0^2$. A
boundary crossing therefore occurs exactly at
$\beta|\nu| = 2\zeta\omega_0^2$; below it no root can have reached the
axis, above it (checking the sign of
$\partial\,\mathrm{Re}\,\lambda/\partial\beta$ at the crossing, which is
positive) one root has positive real part. Taking the worst mode
$|\nu|=\rho(A^3)$ gives the claim. For $K=\omega_0^2I+c^2\mathbf{L}$
with $c>0$ the eigenplanes of $A^3$ are no longer invariant unless
$[\mathbf{L},A^3]=0$, and the threshold is not claimed exact
(Appendix~\ref{app:scope}).
\end{proof}

\begin{proposition}[Unconditional IMEX kernel bound; standard, recalled for
completeness]\label{prop:imex-proof}
Let $M = I + \lambda(K+G)$ with $\lambda>0$, $K$ symmetric with
$K\succeq\omega_0^2 I$, and $G$ real antisymmetric. Then $M$ is
invertible and $\|M^{-1}\|_2 \le (1+\lambda\omega_0^2)^{-1} < 1$,
with no condition on $\lambda$ (hence on $dt$, $\zeta$) and no
commutation hypothesis. Consequently the backward-Euler update
$u' = M^{-1}(u+\lambda F)$ is a strict contraction toward the forced
equilibrium whenever $F$ is \emph{independent of the state}. When $F$
depends on $u$ ---as it does in the implementation, where it collects the
saturating perturbation and the learned drives--- the composed map has
Lipschitz constant at most $(1+\lambda L_F)/(1+\lambda\omega_0^2)$, which is
below one only if $L_F<\omega_0^2$. That is a condition on the operating box,
not a consequence of this proposition, and we do not assert it: at the low
end of the admissible range ($\omega_0^2$ small against the drive gains) it
need not hold.
\end{proposition}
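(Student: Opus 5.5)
The plan is to prove the coercivity bound first and derive everything else from it. Take any $x\in\mathbb{C}^N$ (or $\mathbb{R}^N$) and split $M=I+\lambda K+\lambda G$ into its symmetric part $I+\lambda K$ and its antisymmetric part $\lambda G$.

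\textbf{Step 1 (coercivity).} The antisymmetric part contributes nothing to the real part of the quadratic form. For real $G$ with $G^\top=-G$, the scalar $x^*Gx$ is purely imaginary: its conjugate is $x^*G^\top x=-x^*Gx$. This is the same numerical-range fact used in Lemma~\ref{lem:placement}(b) and in the proof of Lemma~\ref{lem:placement-proof}. Hence
\[
\operatorname{Re}\langle x,Mx\rangle=\|x\|^2+\lambda\,x^*Kx\ge(1+\lambda\omega_0^2)\|x\|^2 ,
\]
using $K\succeq\omega_0^2I$.

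\textbf{Step 2 (inverse bound).} By Cauchy--Schwarz,
\[
(1+\lambda\omega_0^2)\|x\|^2\le|\langle x,Mx\rangle|\le\|x\|\,\|Mx\|,
\]
so $\|Mx\|\ge(1+\lambda\omega_0^2)\|x\|$. This gives injectivity, hence invertibility since $M$ is square. Setting $x=M^{-1}y$ yields $\|M^{-1}y\|\le(1+\lambda\omega_0^2)^{-1}\|y\|$. The constant is strictly below one because $\lambda>0$ and $\omega_0>0$. No eigen-decomposition is used, so neither commutation of $K$ and $G$ nor any condition on $\lambda$ enters. This is exactly the logarithmic-norm resolvent estimate: $\mu_2(-(K+G))\le-\omega_0^2$.

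\textbf{Step 3 (state-independent forcing).} If $F$ does not depend on the state, the fixed point is $u^\sharp=M^{-1}(u^\sharp+\lambda F)$, equivalently $(K+G)u^\sharp=F$. This system is solvable because $K+G$ is invertible by the same coercivity argument without the identity term. Then
\[
u'-u^\sharp=M^{-1}(u-u^\sharp),
\]
so the update contracts toward $u^\sharp$ with factor $(1+\lambda\omega_0^2)^{-1}$.

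\textbf{Step 4 (state-dependent forcing).} If $F=F(u)$ is $L_F$-Lipschitz, then for two states $u,v$
\[
\|u'-v'\|\le\|M^{-1}\|\bigl(\|u-v\|+\lambda\|F(u)-F(v)\|\bigr)\le\frac{1+\lambda L_F}{1+\lambda\omega_0^2}\,\|u-v\| .
\]
This constant is below one if and only if $L_F<\omega_0^2$. The proposition deliberately leaves that condition unasserted.

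The only step needing care is Step 1 in the complex setting. The statement must be made for complex $x$, with $\operatorname{Re}$ applied, so that $x^*Gx$ vanishes in real part rather than exactly. Everything else is routine.
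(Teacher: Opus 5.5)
Your proof is correct and follows the paper's own route. Steps 1--2 are the same argument: the real part of $x^*Gx$ vanishes, which gives coercivity, and Cauchy--Schwarz then yields $\|Mx\|\ge(1+\lambda\omega_0^2)\|x\|$, i.e.\ $\sigma_{\min}(M)\ge 1+\lambda\omega_0^2$. Your Steps 3--4 also write out the forced-equilibrium contraction and the $(1+\lambda L_F)/(1+\lambda\omega_0^2)$ Lipschitz bound, which the paper states but does not derive.
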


\begin{proof}
For any $x\in\mathbb{C}^N$, $x^*Gx$ is purely imaginary: since $G$ is
real antisymmetric, $(x^*Gx)^* = x^*G^\top x = -x^*Gx$. Hence
$\mathrm{Re}\,x^*Mx = \|x\|^2 + \lambda\,x^*Kx \ge
(1+\lambda\omega_0^2)\|x\|^2$, using $K\succeq\omega_0^2 I$. By
Cauchy--Schwarz, $\|Mx\|\,\|x\| \ge |x^*Mx| \ge \mathrm{Re}\,x^*Mx$, so
$\|Mx\| \ge (1+\lambda\omega_0^2)\|x\|$ for all $x$; thus $M$ is
injective (hence invertible) and $\sigma_{\min}(M)\ge
1+\lambda\omega_0^2$, which is the stated bound on $\|M^{-1}\|_2$.
\end{proof}

\begin{remark}
Proposition~\ref{prop:imex-proof} is the reason the first-order branch can run
with equalized caps ($c_{\max}$, $\omega_{0,\max}$ at the second-order
values) in the v4 deconfounding study: the implicit kernel is
unconditionally contractive, so the caps that the explicit
$\zeta$-coupled Euler forced on \texttt{hbp\_first} are not needed.
When the antisymmetric coefficients are \emph{gated} per instance, $G$
cannot be folded into a fixed kernel and the certificate falls back to
the runtime spectral-radius probe, as stated in
Appendix~\ref{app:scope}.
\end{remark}

\paragraph{Author contributions (CRediT).} \ifanon Withheld for double-blind review. \else
\textbf{F.M.A.-C.} (corresponding): conceptualization, methodology, software,
formal analysis, investigation, visualization, project administration,
writing---original draft. \ 
\textbf{F.G.M.}: conceptualization, methodology, formal analysis, validation,
supervision, writing---review and editing. \ 
\textbf{A.A.}: software, data curation, resources, investigation, validation,
writing---review and editing. \ 
\textbf{I.F.}: conceptualization, supervision, funding acquisition, resources,
validation, writing---review and editing. \fi

\paragraph{Data and code availability.} The full program---model code,
preregistrations, run artifacts and the scripts that produce every table and
figure in this paper---is covered by the MIT licence.
\ifanon The repository is withheld here for double-blind review; its URL and
an archival DOI will be given in the camera-ready version.
\else\ifdefined\coderepo It is available at \coderepo.
\else It is available from the corresponding author, and a public repository
with an archival DOI will accompany the published version.\fi\fi
Every number in the text enters as a macro transcribed from an archived
artifact and checked against it, so each one can be traced back to the file it
came from.

\paragraph{Use of generative AI.} The experiments were designed, executed and
adjudicated by the authors. A large language model was used as a coding and
drafting assistant throughout, including for implementation, for adversarial
review of the manuscript's internal consistency, and for literature search;
all claims, numbers and verdicts reported here were verified by the authors
against the archived artifacts.

\begin{proposition}[Discrete criterion for the wave branch; Prop.~\ref{prop:verlet}]
\label{prop:verlet-proof}
Let $\mathbf{K}=\mathbf{K}^\top$, $\mathbf{C}=\mathbf{C}^\top$ be real and
$\mathbf{G}=-\mathbf{G}^\top$ real, and consider the position-Verlet step with
velocity coupling
\[
u_{t+1}=\big(2\mathbf{I}-\Delta t^2\mathbf{K}-\Delta t(\mathbf{C}+\mathbf{G})\big)u_t
-\big(\mathbf{I}-\Delta t(\mathbf{C}+\mathbf{G})\big)u_{t-1}.
\]
Let $z$ be an eigenvalue of its companion operator and $x\neq0$ the
corresponding latent vector, normalized to $\lVert x\rVert=1$. Put
$q=\Delta t^2\,x^*\mathbf{K}x$, $g=\Delta t\,x^*\mathbf{C}x$ and
$\tilde\mu=\Delta t\,\mathrm{Im}(x^*\mathbf{G}x)$, all real. Then
$z^2+a_1z+a_0=0$ with $a_0=(1-g)-i\tilde\mu$ and
$a_1=-(2-q-g)+i\tilde\mu$, and, provided $q>0$,
\[
|z|<1\quad\Longleftrightarrow\quad
\tilde\mu^2<g(2-g)\ \ \text{and}\ \
q(g^2+\tilde\mu^2)<2g\big(g(2-g)-\tilde\mu^2\big).
\]
No commutation between $\mathbf{K}$, $\mathbf{C}$ and $\mathbf{G}$ is assumed.
\end{proposition}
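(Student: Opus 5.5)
The plan has two stages. First reduce the matrix recurrence to a complex scalar quadratic for each latent root. Then apply the Schur--Cohn (Cohn reduction) test for a complex monic quadratic and simplify the resulting inequalities algebraically.

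\emph{Step 1 (latent-root reduction).} Write the companion operator on the state $(u_t,u_{t-1})$. Let $z$ be an eigenvalue with eigenvector $(y_1,y_2)$. The second block row gives $y_1=zy_2$, so $y_2\neq0$, and we put $x=y_2/\lVert y_2\rVert$. The first block row then becomes $P(z)x=0$ with
\[
P(z)=z^2\mathbf{I}-\big(2\mathbf{I}-\Delta t^2\mathbf{K}-\Delta t(\mathbf{C}+\mathbf{G})\big)z+\big(\mathbf{I}-\Delta t(\mathbf{C}+\mathbf{G})\big).
\]
Left-multiplying by $x^*$ gives a scalar quadratic in $z$. For this we use three facts: $x^*\mathbf{K}x$ and $x^*\mathbf{C}x$ are real because the matrices are symmetric, and $x^*\mathbf{G}x\in i\mathbb{R}$ by the antisymmetry argument already used in Prop.~\ref{prop:imex-proof}. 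Hence $\Delta t\,x^*\mathbf{G}x=i\tilde\mu$, and we obtain exactly $z^2+a_1z+a_0=0$ with $a_1=-(2-q-g)+i\tilde\mu$ and $a_0=(1-g)-i\tilde\mu$. Nowhere is simultaneous diagonalization used: the coefficients are Rayleigh quotients of the particular latent vector $x$. This is the claimed absence of a commutation hypothesis.

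\emph{Step 2 (Cohn's criterion).} For a complex monic quadratic, both roots lie in the open unit disk if and only if $|a_0|<1$ and $|a_1-a_0\bar a_1|<1-|a_0|^2$. This follows from one Schur--Cohn reduction step, $p\mapsto(\bar a_0\,p(z)-z^2\overline{p(1/\bar z)})/z$, combined with Rouché's theorem, and we will cite it as standard. Here $z$ is one root of this scalar quadratic, so both roots in the disk implies $|z|<1$. The statement is read, like its source Prop.~\ref{prop:verlet}, as the per-latent-root certificate: the pair of inequalities is the exact condition for that quadratic to be Schur. We then compute directly:
\begin{itemize}
\item $1-|a_0|^2=g(2-g)-\tilde\mu^2=:D$, so the first condition is (i).
\item Setting $s=2-q-g$, a short expansion gives $a_1-a_0\bar a_1=(\tilde\mu^2-gs)+i\,q\tilde\mu$.
\end{itemize}
Since $gs=g(2-g)-gq$, we have $\tilde\mu^2-gs=-(D-gq)$. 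The second condition therefore reads $(D-gq)^2+q^2\tilde\mu^2<D^2$, given $D>0$ from (i). This simplifies to $q\big(q(g^2+\tilde\mu^2)-2gD\big)<0$. Dividing by $q>0$ yields (ii). At $\tilde\mu=0$ the condition becomes $q g^2<2g^2(2-g)$, i.e.\ $q<2(2-g)=4(1-\zeta\omega_0\Delta t)$, which is a useful check against the classical damped-Verlet bound.

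\emph{Main obstacle.} The delicate point is stating Cohn's criterion correctly in the complex case, with strict inequalities and the right conjugations. The whole equivalence rests on that test, so we would state it carefully and verify it on the boundary cases. These are $|a_0|=1$, and equality in the second condition, where a root sits on the unit circle. We must also say explicitly where $q>0$ enters: only in the final division. When $q=0$, the second inequality degenerates to $0<2gD$, which (i) does not force, so that case is excluded as in the statement.
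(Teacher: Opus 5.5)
Your argument is the paper's proof. It uses the same latent-root reduction, the same complex Cohn test, and the same algebra: $a_1-a_0\bar a_1=(qg-D)+i\,q\tilde\mu$ with $D=g(2-g)-\tilde\mu^2$, then division by $q>0$. You additionally justify $y_2\neq0$ from the companion's second block row, which the paper leaves implicit.

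Where you hedge, you are more careful than the paper, but you should turn the hedge into an explicit correction rather than a ``reading''. Cohn's test says when \emph{both} roots of $w^2+a_1w+a_0$ lie in the open disk, and only one of them, $z$, is known to be an eigenvalue. What the argument actually proves is that (i) and (ii) hold if and only if this quadratic is Schur, so (i) and (ii) imply $|z|<1$. The converse asserted in the statement is false. The paper's closing line (``both steps are equivalences'') overlooks this.

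A counterexample: take $N=1$, $\Delta t=1$, $\mathbf{K}=15/4$, $\mathbf{C}=1/2$, $\mathbf{G}=0$. The step is $u_{t+1}=-\tfrac94u_t-\tfrac12u_{t-1}$, with characteristic polynomial $(w+\tfrac14)(w+2)$. The eigenvalue $z=-\tfrac14$, with $x=1$, gives $q=\tfrac{15}{4}$, $g=\tfrac12$, $\tilde\mu=0$. Then $|z|<1$, yet (ii) fails, since $\tfrac{15}{16}>\tfrac34$.

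So state the result as sufficiency for the given $z$, plus an exact characterization of the scalar quadratic. When (i) or (ii) fails, you only learn that $x^*P(w)x$ has a root on or outside the circle. That root need not be $z$, and for $N>1$ it need not be an eigenvalue at all. The operator-level box certificate uses only sufficiency, so nothing downstream breaks.

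Two side remarks also need fixing.

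\emph{The reduction map.} The map you quote is not a polynomial: $\bar a_0p(z)-z^2\overline{p(1/\bar z)}=(\bar a_0a_1-\bar a_1)z-(1-|a_0|^2)$ has a nonzero constant term. Write $p^*(z):=z^2\overline{p(1/\bar z)}$. The map that yields your inequality via Rouch\'e is $(p(z)-a_0p^*(z))/z=(1-|a_0|^2)z+(a_1-a_0\bar a_1)$. This works because $|a_0p^*|=|a_0|\,|p|<|p|$ on $|z|=1$ whenever $p$ has no zeros there.

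\emph{The case $q=0$.} Contrary to what you say, (i) does force (ii) there: (i) gives $0<g<2$ and $D>0$, hence $2gD>0$. The real reason $q=0$ must be excluded is that $1+a_1+a_0=q$. So at $q=0$, $w=1$ is always a root and the quadratic is never Schur, while (i) and (ii) can both hold.
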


\begin{proof}
\emph{Reduction.} Writing the recursion as a quadratic matrix polynomial,
$z$ is a latent root of
$P(z)=z^2\mathbf{I}-z\big(2\mathbf{I}-\Delta t^2\mathbf{K}-\Delta t(\mathbf{C}+\mathbf{G})\big)
+\big(\mathbf{I}-\Delta t(\mathbf{C}+\mathbf{G})\big)$
with $P(z)x=0$. Left-multiplying by $x^*$ and using $\lVert x\rVert=1$ gives the
scalar equation. Here $x^*\mathbf{K}x$ and $x^*\mathbf{C}x$ are real because
$\mathbf{K},\mathbf{C}$ are real symmetric, and $x^*\mathbf{G}x$ is purely
imaginary because $\mathbf{G}$ is real antisymmetric
($\overline{x^*\mathbf{G}x}=x^*\mathbf{G}^\top x=-x^*\mathbf{G}x$). Hence
$a_0,a_1$ have the stated form. Note that this step uses only the latent
vector attached to $z$; it does \emph{not} diagonalize the three operators
simultaneously, and therefore holds when they do not commute.

\emph{Criterion.} For a monic complex quadratic $z^2+a_1z+a_0$, Cohn's
criterion states that both roots lie in the open unit disk if and only if
$|a_0|<1$ and $|a_1-\bar a_1a_0|<1-|a_0|^2$. We evaluate both.

For the first, $|a_0|^2=(1-g)^2+\tilde\mu^2$, so
$|a_0|<1\iff 1-2g+g^2+\tilde\mu^2<1\iff\tilde\mu^2<g(2-g)$, which is (i).
Write $D:=1-|a_0|^2=g(2-g)-\tilde\mu^2$, positive exactly under (i).

For the second, set $s:=2-q-g$, so $a_1=-s+i\tilde\mu$ and
$\bar a_1=-s-i\tilde\mu$. Then
\[
\bar a_1a_0=(-s-i\tilde\mu)\big((1-g)-i\tilde\mu\big)
=-s(1-g)-\tilde\mu^2+i\tilde\mu\big(s-(1-g)\big),
\]
and therefore
\[
a_1-\bar a_1a_0=\big(\tilde\mu^2-sg\big)+i\tilde\mu\,(2-s-g)
=\big(qg-D\big)+i\,q\tilde\mu ,
\]
using $2-s-g=q$ and $\tilde\mu^2-sg=\tilde\mu^2-g(2-q-g)=qg-D$. Squaring,
the condition $|a_1-\bar a_1a_0|^2<D^2$ reads
\[
(qg-D)^2+q^2\tilde\mu^2<D^2
\iff q^2\big(g^2+\tilde\mu^2\big)<2qgD ,
\]
and dividing by $q>0$ gives exactly (ii). Both steps are equivalences, so the
criterion is necessary and sufficient, not merely sufficient.
\end{proof}

\begin{remark}[From latent roots to the operator: the box, and its price]
\label{rem:caja}
Certifying the whole operator requires controlling the triples
$(q,g,\tilde\mu)$, which range over the \emph{joint numerical range} of
$(\mathbf{K},\mathbf{C},\mathbf{G})$ on unit vectors. Replacing that set by
the box
$q\in\Delta t^2[\lambda_{\min}(\mathbf{K}),\lambda_{\max}(\mathbf{K})]$,
$g\in\Delta t[\lambda_{\min}(\mathbf{C}),\lambda_{\max}(\mathbf{C})]$,
$|\tilde\mu|\le\Delta t\,\rho(\mathbf{G})$ yields a \emph{sufficient}
condition, since the box contains the joint range. Verifying it over the box is
a finite computation, but not simply a check at the corners, which an earlier
version of this paper asserted without proof. In $q$ and in $\tilde\mu^2$ the
left side of (ii) increases and the right side decreases, so the worst case is
$q_{\max},|\tilde\mu|_{\max}$. In $g$ it is not monotone:
$F(g):=2g\big(g(2-g)-\tilde\mu^2\big)-q(g^2+\tilde\mu^2)$ is a cubic with
negative leading coefficient, so its minimum over an interval is attained at
an endpoint \emph{or} at the interior critical point solving
$-6g^2+(8-2q)g-2\tilde\mu^2=0$. Likewise (i) binds at whichever endpoint of the
$g$-interval is farther from $1$, since $g(2-g)$ peaks there. The certificate is
therefore the minimum of $F$ over the endpoints together with any interior
critical point. We record that this box bound is markedly conservative: over
$400$ random non-commuting draws it certified only $8$, all of them correctly.
For that reason the operative check in the experiments is the exact spectral
radius of the companion, not the box.
\end{remark}

\section{A common-Lyapunov certificate: what it buys, and where it stops}
\label{app:lmi}
The runtime probe of Sec.~\ref{sec:theory} returns a spectral radius, which
is blind to non-normal transients and says nothing about convex mixtures or
about time-varying (instance-gated) coefficients, since neither the spectral
radius nor stability under products is implied by $\rho(\Phi_t)<1$. The
standard repair is a \emph{common} quadratic Lyapunov function: find
$P\succ0$ with $\Phi^\top P\Phi\preceq\rho^2P$ simultaneously for every
$\Phi$ in the family. If it exists, $\lVert P^{1/2}\Phi P^{-1/2}\rVert\le\rho$
is a \emph{norm} bound; the operator norm is convex, so every convex mixture
inherits it, and submultiplicative, so arbitrary products do too. That single
object would close the mixture, the gated case and the transients at once.

We solved it. One technical point makes the vertex argument valid: $\Phi$ is
not affine in $(\omega_0,\zeta,c)$ ---the entries carry $\omega_0^2$,
$\zeta\omega_0$ and $c^2$--- so we reparameterize in
$(\Delta t^2\omega_0^2,\,2\Delta t\zeta\omega_0,\,\Delta t^2c^2,\dots)$, in
which it \emph{is} affine; the derived box contains the physical one, so the
certificate is conservative rather than optimistic. Feasibility is re-checked
by explicit eigenvalue tests rather than trusted from the solver status.

\paragraph{What it buys.} On the envelope of the trained checkpoints
($\omega_0\in[0.488,0.529]$, $\zeta\in[0.475,0.538]$, $c\le0.371$, read off
the weights) a common $P$ exists with $\rho=0.952$ and
$\kappa=\sqrt{\lambda_{\max}/\lambda_{\min}}=2.51$; the worst vertex spectral
radius is $0.732$, so the trained models sit well inside. This is strictly
stronger than the runtime probe: a norm bound, valid for mixtures and for
gated products \emph{within the wave branch}.

\paragraph{Where it stops, in three places.} First, the \emph{declared}
coefficient box is not certifiable because it is not stable: $40$ of its $64$
vertices are already divergent (worst $\rho=5.24$), the culprit being $c$,
which enters the stiffness as $c^2\lambda_{\max}(\mathbf{L})$ and at
$c_{\max}=0.7$ already contributes $\approx1.83$. The certifiable frontier
degenerates to $c=0,\ \omega_0\le0.276$. What keeps the model away from those
configurations is training, not the box; the honest reading is that the caps
in the configuration are nominal and too loose, and the operative guarantee
remains the runtime check.

Second, the mixture still does not close: including the diffusive branch and
the antisymmetric operators, $\max\lVert\Phi\rVert_P=1.57$ under the wave
branch's $P$, and no common $P$ exists for the enlarged set either.

Third, and most consequentially, the closed loop does not close by
small-gain. The condition $\rho+\kappa\Delta t\,L_F<1$ requires
$L_F<0.0192$. The only state-dependent forcing is
$f_{\mathrm{gain}}f_\theta([h;s])$, whose Lipschitz constant in $h$ is
bounded exactly from the weights by
$f_{\mathrm{gain}}\lVert W_2\rVert\,\mathrm{Lip}(\mathrm{SiLU})\lVert W_{1h}\rVert$;
measured on the trained checkpoints it lies between $0.367$ and $0.918$.
The gap is a factor of roughly fifty, and it is structural rather than
marginal: the contraction margin $1-\rho=0.048$ is an order of magnitude
smaller than the learned forcing gain. Certifying a single operating point
instead of the envelope would raise the threshold to $\approx0.13$ and still
leave a factor of seven.

\paragraph{What that does and does not mean.} Small-gain is a
\emph{sufficient} condition. Its failure does not show the loop is unstable
---it does not diverge in training--- but that this technique does not reach.
We also do not attempt the loop through the host
($h\to$ modulation $\to$ transformer $\to$ interoception $\to f_\theta$),
which would require a Lipschitz bound on the host with respect to its own
modulation that we do not have. We state the boundary rather than blur it:
the certificate is an \emph{integrator} certificate, extended here from
spectral radius to norm on the region actually used, and it stops at the
mixture and at the closed loop. The scripts are
\texttt{experiments/certify\_lmi.py}, \texttt{experiments/certify\_lmi\_ckpt.py} and
\texttt{experiments/certify\_lmi\_iss.py}.

\bibliographystyle{tmlr}
\bibliography{refs}

\end{document}